\documentclass{article}

\PassOptionsToPackage{numbers,compress}{natbib}
\usepackage[preprint]{neurips_2026}

\usepackage[utf8]{inputenc}
\usepackage[T1]{fontenc}
\usepackage{hyperref}
\usepackage{xurl}
\usepackage{url}
\usepackage{booktabs}
\usepackage{array}
\usepackage{ragged2e}
\usepackage{amsfonts}
\usepackage{amsmath}
\usepackage{amssymb}
\usepackage{amsthm}
\usepackage{nicefrac}
\usepackage{microtype}
\usepackage{xcolor}
\usepackage{graphicx}
\usepackage{subcaption}
\usepackage{multirow}
\usepackage{enumitem}
\usepackage[ruled,vlined,linesnumbered]{algorithm2e}
\usepackage{makecell}
\usepackage{booktabs}
\usepackage{colortbl}

\theoremstyle{definition}
\newtheorem{definition}{Definition}[section]
\newtheorem{assumption}[definition]{Assumption}

\theoremstyle{plain}

\newtheorem{proposition}[definition]{Proposition}
\newtheorem{lemma}[definition]{Lemma}

\theoremstyle{remark}
\newtheorem{remark}[definition]{Remark}

\graphicspath{{../build/paper/figures/}{figures/}}

\title{$\boldsymbol{f}$-OPD: Stabilizing Long-Horizon On-Policy Distillation with Freshness-Aware Control}

\author{Xianwei Chen, Shimin Zhang, \textbf{Jibin Wu}\thanks{Corresponding author, jibin.wu@polyu.edu.hk}\\ 
~\\
The Hong Kong Polytechnic University
}

\begin{document}

\maketitle

\begin{abstract}

Scaling on-policy distillation (OPD) for large language models (LLMs) confronts a fundamental tension: asynchronous execution is necessary for system efficiency, but structurally deviates from the ideal on-policy objective.
To address this challenge, we theoretically decompose the objective discrepancy into \emph{rollout drift} and \emph{supervision drift}, capturing staleness in student rollout and teacher context, respectively.
Building on this, we introduce a sample-level freshness score that quantifies the reliability of a buffered sample with respect to the on-policy objective. Guided by this signal, we further propose \textbf{$\boldsymbol{f}$-OPD}, a novel framework that adaptively regulates stale-sample influence and constrains policy drift accumulated under asynchronous training. 
Across reasoning, tool-use, and coding-agent tasks of increasing interaction horizon, $f$-OPD consistently achieves task performance comparable to synchronous optimization while largely retaining the throughput advantages of asynchronous execution.
%
Our results establish the first recipe for achieving a performance–efficiency trade-off in OPD, paving the way for long-horizon agentic post-training at scale.















\end{abstract}

\section{Introduction}
The recent leap in large language models (LLMs) has been driven not only by pretraining but increasingly by advances in post-training. Over the past few years, paradigms such as supervised fine-tuning (SFT) and reinforcement learning (RL) have fundamentally reshaped how pretrained models are aligned, specialized, and deployed~\cite{touvron2023llama2,schulman2017proximal,shao2024deepseekmath}. Most recently, on-policy distillation (OPD), by unifying on-policy learning with dense supervision, has emerged as a powerful technique, driving substantial gains in frontier LLMs~\cite{yang2025qwen3,deepseek2026deepseekv4,xiao2026mimov2flash,zeng2026glm5}.

A standard OPD pipeline consists of three tightly coupled processes: student policy rollout, teacher supervision for grading generated tokens, and student policy optimization. As shown in Fig.~\ref{fig:Fig1}(a), these processes should be executed synchronously in principle so that each training step remains fully on-policy. Empirically, however, strict synchronization leads to substantial system under-utilization, especially in long-horizon settings where rollout and supervision become increasingly expensive. As a result, modern OPD systems~\cite{sheng2024hybridflow,zhu2025slime,nemo-rl} widely adopt asynchronous mode (Fig.~\ref{fig:Fig1}(b)) to improve throughput by overlapping different stages of the pipeline and executing them in parallel. However, this efficiency gain comes at the cost of \emph{staleness}: buffered trajectories and their associated supervision may no longer match the current student policy when used for optimization. Such a mismatch in freshness can distort the effective on-policy objective, destabilize optimization, and ultimately degrade task performance. This creates a fundamental \emph{performance--efficiency trade-off} in OPD: improving system utilization through asynchrony often deteriorates optimization fidelity, while preserving on-policy consistency requires sacrificing throughput.

\begin{figure}[ht]
    \centering
    \includegraphics[width=\linewidth]{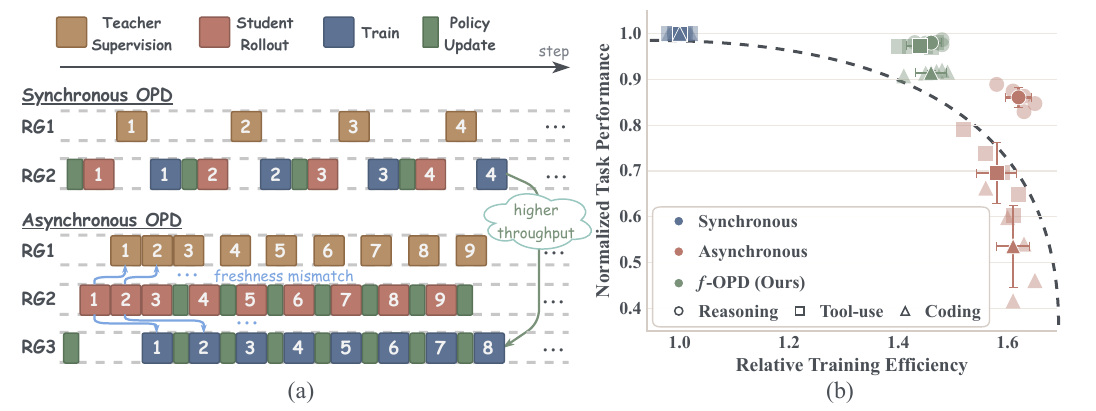}
    \caption{(a) System implementation of OPD under synchronous (top) and asynchronous (bottom) execution. While asynchronous OPD improves training throughput by pipelining computation across multiple resource groups (RGs), the resulting freshness mismatch compromises training fidelity. (b) Performance–efficiency trade-offs across reasoning, tool-use, and coding-agent tasks. Points denote five-seed means with standard deviations. While vanilla OPD executions struggle to simultaneously balance performance and efficiency, \textit{f}-OPD effectively bridges this gap by freshness-aware control.}
    \label{fig:Fig1}
\end{figure}

Unfortunately, this freshness mismatch introduced by asynchronous execution in OPD remains poorly understood and largely unaddressed. An intuitive way to mitigate it is to periodically refresh the replay buffer, preventing stale rollouts from continuously corrupting policy updates. Yet such a hard refresh introduces its own dilemma: refreshing too aggressively discards potentially useful learning signals, while refreshing too conservatively allows stale optimization to accumulate. This tension suggests that staleness should not be treated as a buffer-level binary (pick or drop) trigger for refreshing, but instead be quantified at the level of individual training samples. A natural online proxy for the staleness is policy update lag, which denotes the number of optimization steps elapsed since a sample was generated. However, lag alone is insufficient in OPD: samples with identical age can exhibit vastly different mismatches due to heterogeneous drift from both the student rollout policy and the teacher supervision.

In this paper, we formulate the freshness mismatch in OPD as an objective discrepancy between asynchronous execution and its ideal on-policy counterpart. Through theoretical analysis, we show that this discrepancy is driven by two structurally distinct sources: \emph{rollout drift}, arising from stale policy occupancy, and \emph{supervision drift}, arising from outdated teacher conditioning. Based on this, we derive a fine-grained characterization of sample-level staleness and introduce a \textbf{freshness score} that quantifies each sample’s fidelity to the on-policy objective. We further propose \textbf{$\boldsymbol{f}$-OPD}, a freshness-aware framework that compensates for objective discrepancy while actively correcting the policy drift induced by staleness. Our main contributions are summarized as follows:
\begin{itemize}[leftmargin=1.5em]
    \item We formalize asynchronous OPD as optimization under freshness mismatch, and theoretically characterize the resulting objective discrepancy as arising jointly from both the student and teacher models. Based on this decomposition, we introduce a fine-grained metric for tracking sample-level fidelity to the on-policy distillation objective.

    \item We propose ${f}$-OPD, a freshness-aware OPD framework that bridges the gap between efficient asynchronous execution and ideal on-policy optimization.

    \item We conduct comprehensive experiments on a range of tasks, including reasoning, tool-use, and coding-agent tasks with progressively longer interaction horizons, and demonstrate that ${f}$-OPD consistently achieves a favorable balance between task quality and system throughput.
\end{itemize}

\section{Related Work}
\textbf{LLM Post-training and On-Policy Distillation.}
Post-training has become the primary mechanism for eliciting task competence and interactive reasoning in LLMs. Existing paradigms exhibit complementary trade-offs. SFT~\cite{bengio2015scheduled} optimizes forward-KL objectives over fixed datasets, offering highly stable token-level supervision but limited exploration beyond the training distribution. RL~\cite{schulman2017proximal,shao2024deepseekmath}, in contrast, learns from self-generated trajectories with reward-driven objectives, enabling stronger exploration~\cite{yu2025dapo,cui2025entropy,zhang2025relax} at the cost of unstable optimization and potential policy collapse. Positioned between these two paradigms, OPD~\cite{song2026survey,gu2024minillm,agarwal2024onpolicy,li2026rethinking} combines on-policy learning with distribution-level teacher supervision, inheriting RL's exploratory capability while preserving the dense, stable learning signal of distillation. Recent on-policy self-distillation~\cite{hubotter2026reinforcement,zhao2026selfdistilledreasoner,zhang2026opsdl,sang2026crisp} and online teacher-supervised variants~\cite{ko2026scaling,ye2026onpolicy,yang2026learning,fu2026revisiting,ye2025blackbox,jin2026entropy,zhang2026fast} further demonstrate the promise of this paradigm for LLM post-training. 
Despite this broader progress, a central challenge remains unresolved: maintaining the statistical fidelity of on-policy optimization while achieving the system efficiency required for large-scale OPD training.

\textbf{Asynchronous Systems and Freshness Mismatch.}
Asynchronous learning systems are widely adopted to improve utilization by decoupling data collection and optimization. Asynchronous reinforcement learning systems~\cite{mnih2016asynchronous,espeholt2018impala} have become a standard paradigm for scaling training by decoupling experience collection from policy optimization, enabling higher throughput, better resource utilization, and large-scale distributed execution. However, this decoupling also leads to stale trajectories when acting and learning evolve under inconsistent policy states. Similar staleness effects appear in distributed optimization and parallel training~\cite{zhang2016staleness,harlap2019pipedream}. In policy-gradient methods, this mismatch can already be viewed through the classical on-policy/off-policy tension underlying REINFORCE~\cite{williams1992simple}, and it becomes especially visible in modern LLM-RL, where recent analyses link training collapse directly to training-inference mismatch~\cite{liu2025when}. Prior work further shows that such a mismatch can compound across sequential rollouts, motivating drift-control principles such as dataset aggregation and trust-region optimization~\cite{ross2011reduction,schulman2015trust,schulman2017proximal}. More recent long-horizon RL work sharpens this connection by studying explicit trust-region masking~\cite{li2025trust} and by showing that decoupled PPO can separate the proximal policy used to control update size from the behavior policy used for off-policy correction, thereby making better use of stale data~\cite{hilton2021batch}. Our setting extends this line of work to teacher-supervised OPD, where freshness mismatch arises jointly from stale student rollouts and outdated teacher supervision.

\section{Theoretical Analysis}
\label{sec:theory}
Although the freshness mismatch arises mainly from systems-level asynchrony, it can be analyzed as an objective-level discrepancy. In this section, we formalize the gap between the ideal on-policy distillation objective and the objective optimized under an asynchronous pipeline, and show how this discrepancy arises from both the student and the teacher.

\subsection{Objective Discrepancy under Different OPD Training Modes}

We begin by distinguishing training objectives under synchronous and asynchronous training pipelines. 
Let $\pi_{\theta}^t$ denote the student policy at update step $t$, and let $\pi_{\mathrm{teacher}}(\cdot \mid x)$ denote the teacher distribution conditioned on prefix $x$. The ideal synchronous OPD objective is defined as
\begin{equation}
\mathcal{J}_{\mathrm{sync}}(\pi_{\theta}^t) =
\mathbb{E}_{x \sim d^{t}}
\left[
\ell\bigl(\pi_{\theta}^t(\cdot \mid x), \pi_{\mathrm{teacher}}(\cdot \mid x)\bigr)
\right],
\end{equation}
where $d^{t}$ denotes the mixture distribution over trajectory prefixes induced by rolling out $\pi_{\theta}^t$ across token positions, and $d_h^{t}$ denotes the corresponding depth-$h$ prefix distribution when depth-wise arguments are needed below.
Under asynchronous execution, training proceeds on an active buffer $\mathcal{B}^t$ composed of samples generated and graded at current or earlier policy update steps. For each sample $i \in \mathcal{B}^t$, let $r(i) \le t$ denote the step at which the sample was produced, let $x_i$ denote the stored prefix, and let $c_i^{r(i)}$ denote the teacher-conditioning context actually used when the sample was graded. The corresponding objective implemented by the asynchronous pipeline can be formulated as
\begin{equation}
\mathcal{J}_{\mathrm{async}}(\pi_{\theta}^t) =
\mathbb{E}_{i \sim \mathcal{B}^t}
\left[
\ell\bigl(\pi_{\theta}^t(\cdot \mid x_i), \pi_{\mathrm{teacher}}(\cdot \mid c_i^{r(i)})\bigr)
\right].
\end{equation}
Since student rollout and teacher supervision may be conditioned on different information in the asynchronous pipeline, which is context-dependent, we distinguish them throughout this paper by notation. Specifically, $x$ denotes the student-side prediction prefix, while $c$ denotes the teacher-side conditioning context used to generate supervision. In the simplest prefix-only setting, $c=x$, whereas in general $c\neq x$. Accordingly, $x_i$ denotes the stored sample prefix, $c_i^{r(i)}$ the teacher context used at labeling time, and $c_i^{t}$ the teacher context that would be used if the same sample were relabeled at update step $t$. We then define the asynchronous objective discrepancy at $t$ as
\begin{equation}
\Delta^t = \bigl|\mathcal{J}_{\mathrm{async}}(\pi_{\theta}^t) - \mathcal{J}_{\mathrm{sync}}(\pi_{\theta}^t)\bigr|.
\end{equation}

\subsection{Two-Fold Decomposition of the Asynchronous Objective Discrepancy}
Unlike reward-estimation bias in off-policy RL, $\Delta^t$ arises from optimizing on prefixes generated by older student policies and, potentially, on teacher supervision tied to outdated conditioning contexts. We next theoretically decompose these two sources of discrepancy under the following assumption.

\begin{assumption}[Truncation and regularity]
\label{assumption:regularity}
We evaluate the distillation loss on a finite support where both student and teacher probabilities are bounded away from zero by a positive constant. On this domain, $\ell(\cdot,\cdot)$ is nonnegative and Lipschitz with respect to total-variation distance.
\end{assumption}

To isolate the student- and teacher-sides of freshness mismatch, we first introduce an intermediate distillation objective that pairs stale student rollouts with up-to-date teacher conditioning contexts:

\begin{equation}
\label{eq:intermediate}
\widetilde{\mathcal{J}}_{\mathrm{async}}(\pi_{\theta}^t)
=
\mathbb{E}_{x\sim d^{r(i)}}\bigl[\ell\bigl(\pi_{\theta}^t(\cdot\mid x), \pi_{\mathrm{teacher}}(\cdot\mid x)\bigr)\bigr],
\end{equation}
where $d^{r(i)}$ denote the prefix distribution of sample $i$ induced by the stale policy.
We then define the buffer-level stale distributions by $d^{\mathrm{stale},t} := \mathbb{E}_{i\sim\mathcal{B}^t}[d^{r(i)}]$ and yield the following proposition for discrepancy decomposition.

\begin{proposition}[Two-fold Decomposition for Asynchronous Objective Discrepancy]
\label{prop:async-discrepancy}
Under Assumption~\ref{assumption:regularity}, there exist nonnegative constants $C_{\mathrm{roll}}$ and $C_{\mathrm{sup}}$ such that:
\begin{equation}
\label{eq:decomposition}
\Delta^t
\le
\underbrace{
C_{\mathrm{roll}}\,
\mathrm{TV}\!\left(d^{t},\,d^{\mathrm{stale},t}\right)
}_{\text{rollout drift}}
+
\underbrace{
C_{\mathrm{sup}}\,
\mathbb{E}_{i\sim\mathcal{B}^t}
\!\left[
\mathrm{TV}\!\left(\pi_{\mathrm{teacher}}(\cdot \mid c_i^{t}), \, \pi_{\mathrm{teacher}}(\cdot \mid c_i^{r(i)})\right)
\right]
}_{\text{supervision drift}},
\end{equation}
where $\mathrm{TV}(\cdot,\cdot)$ denotes the total-variation distance. The detailed proof of decomposition is provided in the Appendix~\ref{appendix-sec:proof-decomposition}.
\end{proposition}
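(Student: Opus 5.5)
The plan is a triangle-inequality argument through the intermediate objective $\widetilde{\mathcal{J}}_{\mathrm{async}}$ of \eqref{eq:intermediate}. I read $\widetilde{\mathcal{J}}_{\mathrm{async}}$ in its buffer-averaged form, $\widetilde{\mathcal{J}}_{\mathrm{async}}(\pi_\theta^t)=\mathbb{E}_{i\sim\mathcal{B}^t}\mathbb{E}_{x\sim d^{r(i)}}[\ell(\pi_\theta^t(\cdot\mid x),\pi_{\mathrm{teacher}}(\cdot\mid x))]$, which equals $\mathbb{E}_{x\sim d^{\mathrm{stale},t}}[g^t(x)]$ by linearity. Here $g^t(x):=\ell(\pi_\theta^t(\cdot\mid x),\pi_{\mathrm{teacher}}(\cdot\mid x))$. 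I would write
\[
\Delta^t\le\bigl|\widetilde{\mathcal{J}}_{\mathrm{async}}-\mathcal{J}_{\mathrm{sync}}\bigr|+\bigl|\mathcal{J}_{\mathrm{async}}-\widetilde{\mathcal{J}}_{\mathrm{async}}\bigr|
\]
and bound the first term by rollout drift and the second by supervision drift.

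\textbf{Rollout term.} Both $\mathcal{J}_{\mathrm{sync}}$ and $\widetilde{\mathcal{J}}_{\mathrm{async}}$ are expectations of the same function $g^t$, taken under $d^t$ and $d^{\mathrm{stale},t}$ respectively. Under Assumption~\ref{assumption:regularity}, the support is finite and all probabilities are bounded below by a positive constant. So $\ell$ is a continuous function on a compact set of distributions and is bounded: $0\le g^t\le G$ for a constant $G$. The standard inequality $|\mathbb{E}_P f-\mathbb{E}_Q f|\le (\sup f-\inf f)\,\mathrm{TV}(P,Q)\le G\,\mathrm{TV}(P,Q)$ then gives the rollout-drift term with $C_{\mathrm{roll}}=G$. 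Alternatively one can take $C_{\mathrm{roll}}=2G$, depending on the TV normalization.

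\textbf{Supervision term.} I would take the prefix-only convention of the intermediate objective, with the up-to-date teacher context for sample $i$ being $c_i^t$. The relabeled loss is then $\ell(\pi_\theta^t(\cdot\mid x_i),\pi_{\mathrm{teacher}}(\cdot\mid c_i^t))$, and $\widetilde{\mathcal{J}}_{\mathrm{async}}$ is the buffer expectation of it, identifying the stored prefixes $x_i$ as draws from $d^{r(i)}$. Subtracting $\mathcal{J}_{\mathrm{async}}$ inside the buffer expectation, each summand differs only in the second argument of $\ell$. Lipschitzness of $\ell$ in TV in its second argument, with constant $L$, gives
\[
|\mathcal{J}_{\mathrm{async}}-\widetilde{\mathcal{J}}_{\mathrm{async}}|\le L\,\mathbb{E}_{i\sim\mathcal{B}^t}\bigl[\mathrm{TV}(\pi_{\mathrm{teacher}}(\cdot\mid c_i^t),\pi_{\mathrm{teacher}}(\cdot\mid c_i^{r(i)}))\bigr],
\]
so $C_{\mathrm{sup}}=L$.

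\textbf{Main obstacle.} The main difficulty is notational rather than analytic. $\mathcal{J}_{\mathrm{async}}$ averages over realized buffer samples, while $\widetilde{\mathcal{J}}_{\mathrm{async}}$ is written as an expectation over $d^{r(i)}$. To match them I must treat the buffer expectation as including the randomness of $x_i\sim d^{r(i)}$, or equivalently define $\widetilde{\mathcal{J}}_{\mathrm{async}}$ on the same stored prefixes. I also need to identify the fresh teacher context $c_i^t$ with the prefix $x$ used in \eqref{eq:intermediate}, which is exact in the prefix-only case $c=x$. In the general case I would simply define the intermediate objective using $c_i^t$; the rollout step is unchanged, because the teacher context then depends only on the prefix and the current step.

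Both constants are nonnegative, since they are the boundedness and Lipschitz constants of $\ell$ guaranteed on the truncated domain.
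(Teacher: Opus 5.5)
Your proposal is correct and follows the paper's own argument: a triangle inequality through $\widetilde{\mathcal{J}}_{\mathrm{async}}$. The supervision term is controlled by TV-Lipschitzness of $\ell$ in its teacher argument (the paper's $L_q$), and the rollout term by boundedness of $\ell$ plus a change of measure (the paper's $L_{\mathrm{occ}}$). The paper's sketch leaves implicit the identifications you flag, namely the stored prefixes $x_i$ as draws from $d^{r(i)}$ and $c_i^t$ as the current teacher context for $x$; for the change-of-measure step, that context must also be the one used inside $\mathcal{J}_{\mathrm{sync}}$, so spelling this out is a genuine improvement.
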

By the convexity of $\mathrm{TV}(\cdot,\cdot)$:
\begin{equation}
    \mathrm{TV}\!\left(d^{t},\,d^{\mathrm{stale},t}\right)
\le
\mathbb{E}_{i\sim\mathcal{B}^t}\!\left[\mathrm{TV}\!\left(d^{t},\,d^{r(i)}\right)\right],
\end{equation}
which recasts the rollout drift term in Eq.~\ref{eq:decomposition} into a buffer-level expectation to align with the supervision drift term.
Thus, the objective discrepancy in asynchronous OPD is decomposed into two structurally distinct terms: \emph{\textbf{rollout drift}}, arising when the active buffer reflects prefixes drawn from the older  student policy rather than the current policy; and \emph{\textbf{supervision drift}}, arising when teacher supervision is attached to outdated conditioning contexts that no longer align with the current contexts.

\subsection{Diagnostics for Tracking Staleness}
\label{sec:diagnostics}
While the above decomposition identifies the sources of staleness and formally characterizes their contributions, the resulting drift terms are not directly observable during optimization. We therefore introduce sample-level diagnostics that empirically track them through the distributions of the student and teacher models. 
Specifically, for rollout drift, we use an on-support divergence evaluated on buffered prefixes:
\begin{equation}
\delta_i^{\mathrm{roll}} = \mathrm{TV}\!\left(\pi_{\theta}^{t}(\cdot\mid x_i),\, \pi_{\theta}^{r(i)}(\cdot\mid x_i)\right),
\qquad
D_i^{\mathrm{roll}} = \mathrm{KL}\!\left(\pi_{\theta}^{t}(\cdot\mid x_i)\,\|\,\pi_{\theta}^{r(i)}(\cdot\mid x_i)\right).
\end{equation}
By Pinsker's inequality, $\delta_i^{\mathrm{roll}} \le \sqrt{\tfrac{1}{2}\, D_i^{\mathrm{roll}}}$, so the empirical KL serves as a conservative upper bound on the corresponding total-variation distance.
In sequential decision-making settings, however, even local policy drift can compound over time and amplify stale occupancy. We provide analysis for this effect in Proposition~\ref{prop:horizon-compounding}.
As for supervision drift, we can also derive an upper bound by Pinsker's inequality as (see Appendix~\ref{appendix-sec:sup_drift_analysis} for details):
\begin{equation}
    \delta_i^{\mathrm{sup}} = \mathrm{TV}\!\left(\pi_{\mathrm{teacher}}(\cdot\mid c_i^t),\, \pi_{\mathrm{teacher}}(\cdot\mid c_i^{r(i)})\right)
    \le \sqrt{\tfrac{1}{2}\,D_i^{\mathrm{sup}}},
\end{equation}
where $D_i^{\mathrm{sup}} = \mathrm{KL}\!\left(\pi_{\mathrm{teacher}}(\cdot \mid c_i^{t})\,\|\,\pi_{\mathrm{teacher}}(\cdot \mid c_i^{r(i)})\right)$ denotes the divergence between the teacher distribution under the current and outdated contexts.
In summary, $D_i^{\mathrm{roll}}$ and $D_i^{\mathrm{sup}}$ serve as monotone surrogates rather than exact estimators of the discrepancy terms, but provide observable signals that can be incorporated into freshness-aware control.
Together with policy update lag, we illustrate in Figure~\ref{fig:theory-diagnostics-schematic} (top) three complementary diagnostics for characterizing sample-level staleness. These diagnostics form the foundation for incorporating freshness-aware control into OPD optimization.

\begin{figure}[ht]
\centering
\includegraphics[width=\linewidth]{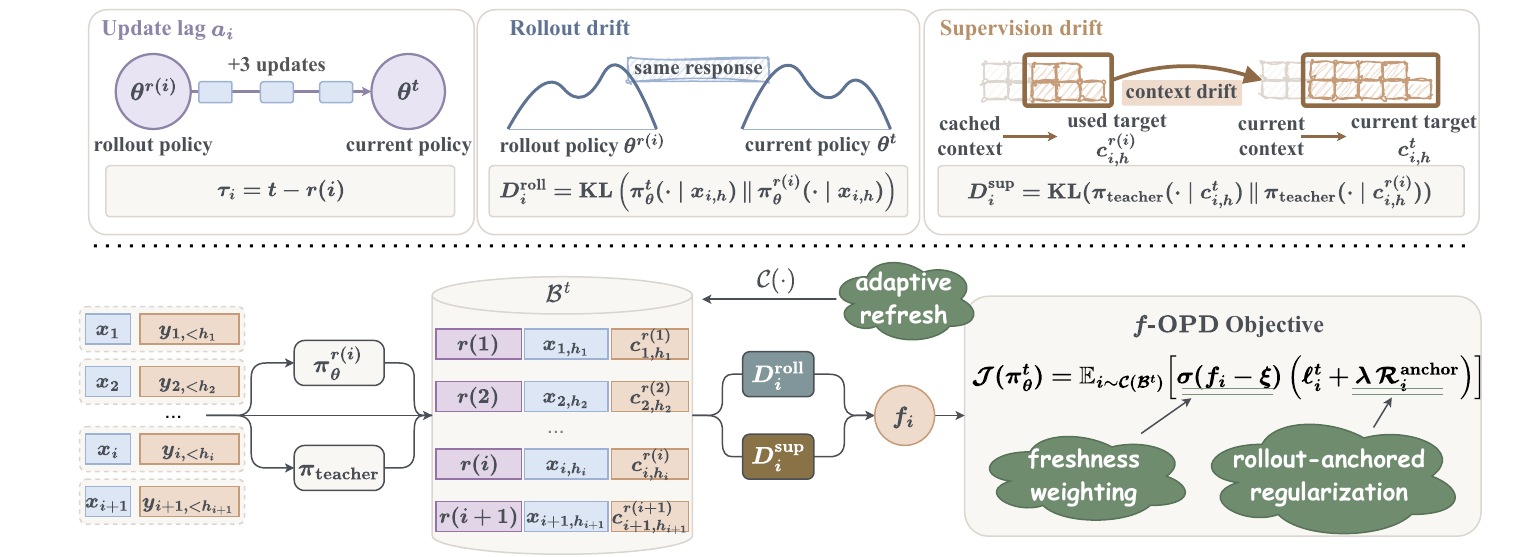}
\caption{Systematic overview of \textit{f}-OPD. \emph{Top}: three sample-level diagnostics used to characterize staleness. \emph{Bottom}: the overall \textit{f}-OPD pipeline, where sample freshness is estimated from these diagnostics and integrated into OPD optimization through three complementary mechanisms.}
\label{fig:theory-diagnostics-schematic}
\end{figure}

\section{Freshness-Aware Control for Long-Horizon OPD}
\label{sec:method}
Our analysis in Sec.~\ref{sec:theory} has decomposed asynchronous OPD staleness as rollout and supervision drift, and introduces the corresponding diagnostics $D_i^{\mathrm{roll}}$ and $D_i^{\mathrm{sup}}$. 
In this section, we define a sample-level freshness score based on these diagnostics, and further propose ${f}$-OPD with three complementary mechanisms for freshness-aware control.

\subsection{Fine-Grained Sample Freshness Scoring}
To evaluate how stale each buffered sample is, each buffer must retain enough provenance to replay the stale trajectory under both the current and stale policies, and to reconstruct the teacher-conditioning contexts used now and at the grading time. For each sample $i$, we maintain its actual rollout step $r(i)$, the stored prefix $x_i$, and the token-level information $\{(y_{i,h},\, c_{i,h}^{r(i)})\}_{h=1}^{H_i}$, where $H_i$ denotes the token length of sample $i$, $y_{i,h}$ is the token generated by student at position $h$, and $c_{i,h}^{r(i)}$ is the teacher-conditioning context attached to that token when the sample was graded.
Follow the previous introduction of distinction between $x$ and $c$, we now refine the sample-level notation from Sec.~\ref{sec:theory} by lifting $x_i$ to token-level replay prefixes and $c_i^t$ to token-level current teacher contexts, since replay-based comparison is performed only at aligned token positions. We likewise refine the stored prefix into token-level replay prefixes
\begin{equation}
    x_{i,h} := (x_i, y_{i,<h}),
\end{equation}
so that the diagnostics and losses below are evaluated on aligned token positions and later aggregated back to the sample level.
Let $\mathcal{U}_i^{\mathrm{roll}},\mathcal{U}_i^{\mathrm{sup}}\subseteq\{1,\dots, H_i\}$ denote token positions for which replay yields valid and comparable conditioning. 
We assume that the aligned sets are nonempty whenever a sample is used to compute diagnostics or optimization losses. 
For each position, we use the replayed token prefix $x_{i,h}$ to compute token-level diagnostics:
\begin{equation}
\label{eq:kl-diagnostics}
\left\{
\begin{aligned}
D_i^{\mathrm{roll}} &=
\frac{1}{|\mathcal{U}_i^{\mathrm{roll}}|}
\sum_{h\in \mathcal{U}_i^{\mathrm{roll}}}
\mathrm{KL}\!\left(
    \pi_{\theta}^t(\cdot\mid x_{i,h})\,\|\,\pi_{\theta}^{r(i)}(\cdot\mid x_{i,h})
\right),
\\
D_i^{\mathrm{sup}} &=
\frac{1}{|\mathcal{U}_i^{\mathrm{sup}}|}
\sum_{h\in \mathcal{U}_i^{\mathrm{sup}}}
\mathrm{KL}\!\left(
    \pi_{\mathrm{teacher}}(\cdot\mid c_{i,h}^{t})\,\|\,\pi_{\mathrm{teacher}}(\cdot\mid c_{i,h}^{r(i)})
\right).
\end{aligned}
\right.
\end{equation}
Moreover, each sample carries an update lag $\tau_i = t-r(i)\ge 0$, providing a coarse-grained online proxy (see Appendix~\ref{appendix-sec:lag-bound}) for global freshness: older samples are generally less reliable and should eventually be discarded. 
The replay-based diagnostics $D_i^{\mathrm{roll}}$ and $D_i^{\mathrm{sup}}$ therefore complement this coarse age signal with finer-grained sample-specific drift information, while the inverse-age factor $(\tau_i+1)^{-1}$ remains a cheap operational proxy for the budget over which rollout drift may have accumulated. 
We therefore describe the sample-level freshness of $i$ at $t$ with
\begin{equation}
\label{eq:freshness}
    f_i = \frac{1}{\tau_i+1} \mathrm{exp}(-\widetilde{\Delta}_i^t), \hspace{0.5em} \mathrm{where} \hspace{0.5em} \widetilde{\Delta}_i^t =
    \alpha \sqrt{D_i^{\mathrm{roll}}} + \beta \sqrt{D_i^{\mathrm{sup}}} \approx \Delta^t_i.
\end{equation}
Coefficients $\alpha,\beta \ge 0$ map the two diagnostics onto a common operational scale. Importantly, the approximation $\widetilde{\Delta}_i^t$ to the discrepancy in Proposition~\ref{prop:async-discrepancy} should be interpreted as a monotone surrogate that ranks samples by their predicted contribution to stale-objective bias, rather than as an exact estimator of that bias. 
The $\mathrm{exp}(\cdot)$ in freshness calculation thus smoothly and monotonically downweights high-staleness samples, reducing variance while preserving relative ordering.

\subsection{\texorpdfstring{$\boldsymbol{f}$-OPD}{f-OPD}}
Next, we provide a step-by-step derivation of \textit{f}-OPD with freshness-aware control, starting from the vanilla sample-level distillation loss:
\begin{equation}
\label{eq:sample-aggregated-loss}
    \ell_i^t =
    \frac{1}{|\mathcal{U}_i^{\mathrm{sup}}|}
    \sum_{h\in \mathcal{U}_i^{\mathrm{sup}}}
    \ell\bigl(\pi_{\theta}^t(\cdot\mid x_{i,h}), \pi_{\mathrm{teacher}}(\cdot\mid c_{i,h}^{t})\bigr).
\end{equation}
Building on the freshness score defined in Eq.~\ref{eq:freshness}, we first introduce \textbf{\emph{freshness weighting}} by scaling each sample's contribution to the distillation loss according to its fidelity to on-policy objective:
\begin{equation}
    \mathcal{L}^t_{\mathrm{distill}} = 
    \mathbb{E}_{i\sim \mathcal{B}^t}\Bigl[ \sigma(f_i - \xi) \hspace{0.2em} \ell_i^t \Bigr],
\end{equation}
where $\sigma(\cdot)$ is the ReLU function, so that a sample with the freshness score below the threshold $\xi$ will be excluded from optimization.
Weighting is a first layer of defense: it reduces further deviation caused by over-optimizing on stale samples. However, it does not explicitly constrain how far the current policy can deviate from the rollout policy associated with the surviving samples, nor can it repair a mismatch that is already present in an active buffer that has become globally stale.
Moreover, replay-based diagnostics are meaningful only at token positions where the stale and current trajectories remain alignable. When the active buffer has low mean freshness, or when the aligned-position coverage ratios $|\mathcal{U}_i^{\mathrm{roll}}|/H_i$ and $|\mathcal{U}_i^{\mathrm{sup}}|/H_i$ become too small, the buffer no longer provides sufficiently reliable supervision or drift estimates.

To address these limitations, we introduce two additional mechanisms. 
On the one hand, to enforce local consistency between the current and rollout policy, we incorporate \textbf{\emph{rollout-anchored regularization}} into the objective. 
Specifically, the regularization term $\mathcal{R}_i^{\mathrm{anchor}}=D_i^{\mathrm{roll}}$ uses the rollout-drift diagnostic from Eq.~\ref{eq:kl-diagnostics} as an observable surrogate for the stale-occupancy mismatch analyzed in Sec.~\ref{sec:diagnostics}. Notably, $\mathcal{R}^{\mathrm{anchor}}_i$ is also modulated by freshness weighting, since corrective regularization becomes less effective as sample fidelity deteriorates.
On the other hand, we introduce \textbf{\emph{adaptive refresh}} to monitor the aggregate freshness score and alignment statistics of the active buffer:
\begin{equation}
\bar{f}^{t} = \frac{1}{|\mathcal{B}^t|}\sum_{i\in\mathcal{B}^t} f_i, \quad
\bar{\mathcal{M}}^{t,\mathrm{roll}} = \frac{1}{|\mathcal{B}^t|}\sum_{i\in\mathcal{B}^t} \frac{|\mathcal{U}_i^{\mathrm{roll}}|}{H_i}, \quad
\bar{\mathcal{M}}^{t,\mathrm{sup}} = \frac{1}{|\mathcal{B}^t|}\sum_{i\in\mathcal{B}^t} \frac{|\mathcal{U}_i^{\mathrm{sup}}|}{H_i}.
\end{equation}
The buffer $\mathcal{B}^t$ is refreshed whenever it no longer satisfies the required freshness or alignment conditions, either due to degraded sample-mean freshness or insufficient aligned-support coverage:
\begin{equation}
    \bar{f}^{t} \le \kappa_{f},
    \qquad \text{or} \qquad
    \bar{\mathcal{M}}^{t,\mathrm{roll}} < \kappa_{\mathrm{roll}},
    \qquad \text{or} \qquad
    \bar{\mathcal{M}}^{t,\mathrm{sup}} < \kappa_{\mathrm{sup}},
\end{equation}
where $\kappa_f, \kappa_{\mathrm{roll}}, \kappa_{\mathrm{sup}}$ denote the corresponding thresholds. 

Combining the above three mechanisms, we propose ${f}$-OPD with a freshness-aware objective:
\begin{equation}
    \mathcal{J}(\pi_{\theta}^t) =
    \mathbb{E}_{i\sim \mathcal{C}(\mathcal{B}^t)}\Bigl[
    \sigma(f_i - \xi) \hspace{0.2em} \bigl(\ell_i^t + \lambda\, \mathcal{R}_i^{\mathrm{anchor}}\bigr)
    \Bigr],
\end{equation}
where $\lambda$ adjusts the strength of rollout-anchored regularization, and $\mathcal{C}(\mathcal{B}^t)$ denotes the refresh mechanism of renewing the active buffer to restore a more reliable supervision stream and more informative drift diagnostics for ${f}$-OPD.
We provide pseudocode in Appendix Algorithm~\ref{alg:fopd}, which summarizes the complete procedure in an implementation-oriented form.

\section{Experiments}
\begin{figure*}[t]
    \centering
    \includegraphics[width=\textwidth]{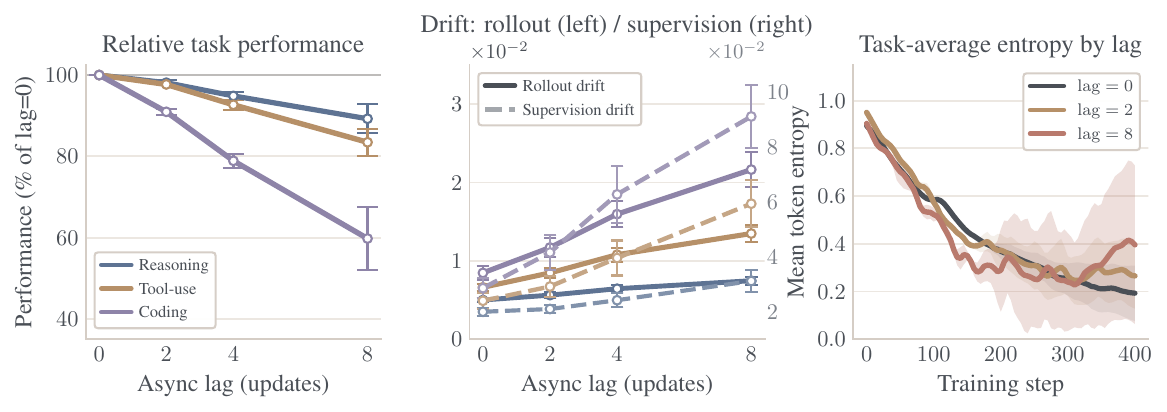}
    \caption{Failure modes of vanilla OPD under increasing policy update lag. (a) Relative task performance degradation with lags. (b) Drift diagnostics: rollout drift on the left axis, and supervision drift on the right axis, both increasing with lag. (c) Token-level entropy dynamics, averaged uniformly over three tasks, for lag~0, 2, and 8 across training. Markers and curves show five-seed means; error bars and shaded bands denote a standard deviation.}
    \label{fig:lag-diagnostics}
\end{figure*}

\begin{figure*}[b]
    \centering
    \includegraphics[width=0.98\textwidth]{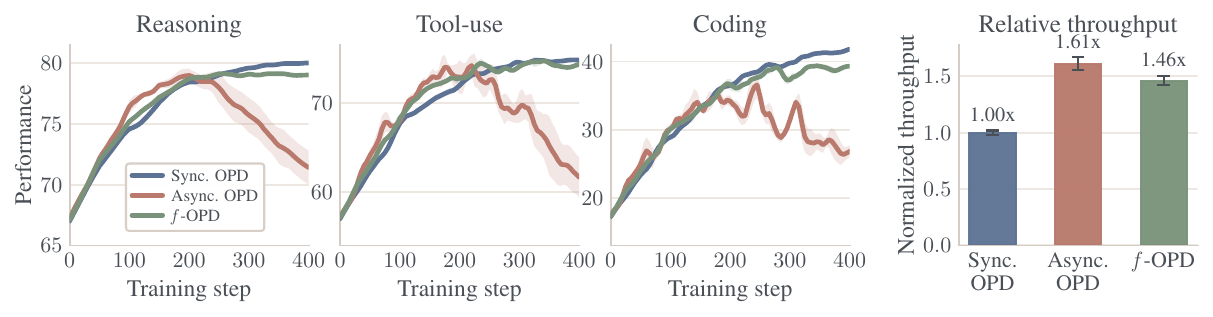}
    \caption{(a–c) Training dynamics across tasks for synchronous OPD, asynchronous OPD, and \textit{f}-OPD. (d) Relative training throughput across tasks.
    }
    \label{fig:task-dynamics-grid}
\end{figure*}

Our experiments highlight the challenge of balancing performance and efficiency in long-horizon OPD training, and demonstrate how the proposed ${f}$-OPD effectively navigates this trade-off with freshness-aware control.

\subsection{Settings}
We summarize the key experimental settings from the following three perspectives. 

\textbf{Protocols.} 
We follow the student-teacher distillation setup introduced by Thinking Machines Lab~\cite{lu2025onpolicy}. 
For model selection across tasks, we use Qwen family models as student models, trained under frozen supervision from larger Qwen teacher models. Training is conducted under both synchronous and asynchronous pipeline.

\textbf{Task taxonomies.} 
We consider three task families with increasing interaction horizon: (i) short-horizon reasoning, where supervision is effectively single-shot; (ii) medium-horizon tool-use, where intermediate actions influence subsequent observations and supervision; and (iii) long-horizon coding agent, where errors compound across iterative edits and executions. DAPO-Math-17K~\cite{bytedtsinghua2025dapomath17k} ($\sim$17K samples) is adopted as the underlying prompt distribution for both short-horizon reasoning and medium-horizon tool-use. In the tool-use setting, those problems are wrapped in a deterministic ReTool-style code-interpreter harness~\cite{feng2025retool}. For long-horizon coding-agent tasks, we use mini-SWE-agent~\cite{yang2024sweagent} as the coding-agent scaffold on real repository-level software-engineering tasks.

\textbf{Evaluations.} 
We evaluate short-horizon reasoning on MATH500~\cite{hendrycks2021measuring}, tool-use on a held-out olympiad-style set of 1,000 problems executed in the same harness, and coding agents on SWE-bench Verified~\cite{jimenez2024swebench}. For short-horizon reasoning, we report \emph{Avg@1 accuracy}. For tool-use, we report \emph{Avg@4 accuracy} under the ReTool-style harness, together with \emph{invalid tool rate} (the fraction of episodes containing at least one invalid code-interpreter invocation) and \emph{peak-final drop} (the performance drop from peak to final training). For coding agents, we report the SWE-bench Verified \emph{resolve rate} under a single-run, single-patch evaluation protocol, along with repeated-loop, premature-stop, and \emph{post-patch regression} rates (the performance of the patch drops from peak to final training). All reported metrics are averaged over five random seeds.

Detailed experimental settings and additional implementation details are provided in Appendix~\ref{append-sec:setting}.

\subsection{Failure Analysis of Vanilla OPD}

Before evaluating the effectiveness of ${f}$-OPD, we first conduct a systematic failure analysis of vanilla OPD. Specifically, we investigate two questions: (i) \emph{how does increasing policy lag affect task performance under OPD training?} and (ii) \emph{which factors are responsible for this degradation under increasing policy lag?}
To answer these questions, we evaluate four OPD variants across three tasks with manually controlled policy update lags of 0, 2, 4, and 8, spanning conditions from fully on-policy optimization to increasing asynchrony.

As shown in Figure~\ref{fig:lag-diagnostics}(a), all three tasks exhibit a clear and monotonic performance degradation as policy lag increases, confirming that policy staleness is inherently harmful to asynchronous OPD fidelity. Moreover, this degradation becomes substantially more severe as the interaction horizon grows. Particularly for the coding-agent task, performance drops by roughly 40\% under a lag of 8.
Figure~\ref{fig:lag-diagnostics}(b) further reports the KL-based diagnostics of rollout and supervision drift. Both drifts grow systematically with policy lag and interaction horizon, and supervision drift scales more steeply with lag than rollout drift, highlighting the importance of controlling teacher-context staleness in long-horizon OPD.
Interestingly, Figure~\ref{fig:lag-diagnostics}(c) shows that, unlike synchronous training, asynchronous OPD exhibits progressively less stable task-averaged entropy dynamics as lag increases. The highest-lag setting ($\mathrm{lag}=8$, red) develops the widest late-training uncertainty band, indicating that some runs retain high-entropy exploration whereas others move toward partial collapse. This suggests that policy drift under asynchronous execution not only amplifies objective mismatch but also destabilizes generation dynamics, making entropy a practical early-warning signal for training instability.

\begin{table}[t]
\centering
\caption{Comparison of \textit{f}-OPD against representative post-training methods on the coding-agent task~\cite{jimenez2024swebench}. Relative throughput is normalized to synchronous OPD. 
}
\label{tab:systems-tradeoff}

\small
\setlength{\tabcolsep}{4pt}

\begin{tabular}{@{}lccccc@{}}
\toprule
\textbf{Method}
& \textbf{Scale}
& \textbf{Throughput ↑}
& \textbf{Resolve ↑}
& \textbf{Post-patch reg. ↓}
& \textbf{Collapse ratio ↓} \\
\midrule

\multicolumn{6}{c}{\textit{Supervised Fine-tuning}} \\
\midrule
Vanilla SFT & 8B & $2.18\times$ & 39.1 & --- & 0/5 \\
SWE-Gym$^*$~\citep{pan2025training} & 32B & --- & 20.6 & --- & --- \\

\midrule
\multicolumn{6}{c}{\textit{Reinforcement Learning}} \\
\midrule
Vanilla GRPO & 8B & $0.92\times$ & 40.2 & --- & --- \\
SA-SWE$^*$~\citep{cao2025skyrlagent} & 32B & --- & 39.4 & --- & --- \\
DeepSWE$^*$~\cite{luo2025deepswe} & 32B & --- & 36.4 & --- & --- \\

\midrule
\multicolumn{6}{c}{\textit{On-Policy Distillation}} \\
\midrule
Qwen3-Coder$^{\dagger}$~\cite{cao2026qwen3codernext} & 30B & --- & 51.6 & --- & --- \\
Sync. OPD & 8B & $1.00\times$ & 41.8 & 1.6 & 0/5 \\
Async. OPD & 8B & $1.61\times$ & 26.8 & 12.1 & 3/5 \\
Async. OPD + hard refresh & 8B & $1.15\times$ & 35.1 & 6.6 & 1/5 \\
Async. OPD + lag-only weighting & 8B & $1.54\times$ & 34.9 & 7.6 & 2/5 \\

\rowcolor{green!8}
\textbf{\textit{f}-OPD (ours)} & 8B & $\mathbf{1.46\times}$ & $\mathbf{39.4}$ & 2.6 & 0/5 \\

\bottomrule
\end{tabular}
\parbox{\linewidth}{\footnotesize
$^{*}$ Results reported from publicly available sources. 
\quad
$-$ Results not available.
\quad
$^{\dagger}$ Teacher model used for OPD. 
}
\end{table}


\subsection{Main Results of \texorpdfstring{$\boldsymbol{f}$-OPD}{f-OPD}}

Our main results are organized into two parts. We first benchmark \textit{f}-OPD against synchronous and asynchronous OPD baselines across the three tasks, measuring task quality (evaluation accuracy along training) and relative system throughput (completed training samples per wall-clock hour, normalized to synchronous OPD). Fig.~\ref{fig:task-dynamics-grid} shows that $f$-OPD inherits the task quality of synchronous OPD and much of the throughput advantage of asynchronous OPD simultaneously, recovering the on-policy fidelity that vanilla asynchronous mode sacrifices without paying the full throughput cost of synchronous execution. 
Additionally, as shown in Appendix Figure~\ref{fig:appendix-entropy-lag-compare}, $f$-OPD maintains entropy dynamics comparable to synchronous training, demonstrating the stability brought by freshness-aware control.

To assess \textit{f}-OPD on the most demanding setting, we compare it against a broad spectrum of post-training methods on the long-horizon coding-agent task: SFT, RL, and two asynchronous OPD baselines with simplified freshness-aware control—one applying hard refresh to buffered samples, the other scoring sample freshness only by policy update lag (see Appendix~\ref{append-sec:Compared_methods} for details). Methods reproduced by us share the same base model (Qwen3-8B). Results are reported in Table~\ref{tab:systems-tradeoff}. SFT achieves the highest throughput but is bottlenecked by its reliance on high-quality offline trajectories, which are costly to scale in long-horizon agentic settings. RL is on-policy by design but suffers from low throughput due to long-tailed rollout latency. Synchronous OPD attains the strongest task quality among on-policy methods (41.8 resolve rate) but inherits the throughput penalty of synchronous execution. Vanilla asynchronous OPD delivers a $1.61\times$ throughput gain over synchronous OPD, but task quality collapses to a 26.8 resolve with a 60\% collapse rate across runs. Simplified freshness-aware variants (hard refresh \& lag-only weighting) mitigate this degradation only partially and remain well below the synchronous ceiling.
In contrast, \textit{f}-OPD attains a $1.46\times$  throughput gain while preserving 94\% synchronous resolve and incurring only a 2.6-point post-patch regression, with no observed collapse. These results position \textit{f}-OPD as a method that closes the performance–efficiency gap left open by both general-purpose post-training paradigms and vanilla asynchronous OPD variants.

\subsection{Ablation Study}
The effectiveness of \textit{f}-OPD stems from the synergy of three principled mechanisms: \emph{freshness weighting}, \emph{rollout-anchored regularization}, and \emph{adaptive refresh}. 
To isolate their individual contributions, we incrementally add each component on top of vanilla asynchronous OPD on the tool-use and coding-agent tasks (Table~\ref{tab:ablations}). Specifically, freshness weighting recovers most of the lost task quality (+9.2 avg@4 accuracy, +9.7 coding resolve) and reduces the collapse ratio to 1/5, confirming that down-weighting stale samples directly counters rollout drift. Rollout-anchored regularization further closes the gap by stabilizing per-update drift (+2.4 avg@4 accuracy, +1.6 coding resolve), while adaptive refresh delivers the final gains and eliminates collapse entirely. These results demonstrate that the three mechanisms address complementary axes of freshness rather than redundant ones.

\begin{table}[t]
\centering
\caption{Ablation results of the three freshness-aware control mechanisms in \textit{f}-OPD on tool-use and coding-agent tasks.}
\label{tab:ablations}

\small
\setlength{\tabcolsep}{4pt}

\resizebox{\linewidth}{!}{%
\begin{tabular}{@{}lcc@{\hspace{8pt}}ccc@{}}
\toprule
\textbf{Method} 
& \multicolumn{2}{c}{\textbf{Tool-use}} 
& \multicolumn{3}{c}{\textbf{Coding agent}} \\

\cmidrule(lr){2-3}\cmidrule(lr){4-6}

& \textbf{Acc.$_{\boldsymbol{\mathrm{avg@4}}}$} 
& \textbf{Peak-final drop} 
& \textbf{Resolve} 
& \textbf{Post-patch regression}
& \textbf{Collapse ratio} \\

\midrule

\textbf{Synchronous OPD} & 74.8 & 2.7 & 41.8 & 1.6 & 0/5 \\
\midrule

\textbf{Asynchronous OPD} & 61.6 & 10.4 & 26.8 & 7.4 & 3/5 \\
+ freshness weighting & 70.8 & 4.3 & 36.5 & 4.4 & 1/5 \\
+ rollout-anchored regularization & 73.2 & 3.4 & 38.1 & 3.2 & 1/5 \\

+ adaptive refresh (${\boldsymbol{f}}$\textbf{-OPD}) 
& 74.4 
& 3.1 
& 39.4 
& 2.6
& 0/5 \\

\bottomrule
\end{tabular}
}
\end{table}



\section{Conclusion}
\label{sec:conclusion}

In this paper, we systematically analyze the performance–efficiency trade-off of vanilla OPD under both synchronous and asynchronous execution. We identify the principal source of freshness mismatch in OPD from both the student rollout and the teacher context, and propose \textit{f}-OPD to guide asynchronous optimization through freshness-aware control. Its effectiveness is established across tasks of varying interaction horizons, with the most pronounced gains on long-horizon coding agents. Looking ahead, we aim to scale \textit{f}-OPD to larger foundation models, and a broader spectrum of general-purpose agentic tasks.




{\small
\bibliographystyle{unsrtnat}
\bibliography{refs/references}

@article{williams1992simple,
  title={Simple Statistical Gradient-Following Algorithms for Connectionist Reinforcement Learning},
  author={Williams, Ronald J.},
  journal={Machine Learning},
  volume={8},
  number={3-4},
  pages={229--256},
  year={1992},
  doi={10.1007/BF00992696}
}

@article{li2026rethinking,
  title={Rethinking On-Policy Distillation of Large Language Models: Phenomenology, Mechanism, and Recipe},
  author={Li, Yaxuan and Zuo, Yuxin and He, Bingxiang and Zhang, Jinqian and Xiao, Chaojun and Qian, Cheng and Yu, Tianyu and Gao, Huan-ang and Yang, Wenkai and Liu, Zhiyuan and others},
  journal={arXiv preprint arXiv:2604.13016},
  year={2026}
}

@article{bengio2015scheduled,
  title={Scheduled sampling for sequence prediction with recurrent neural networks},
  author={Bengio, Samy and Vinyals, Oriol and Jaitly, Navdeep and Shazeer, Noam},
  journal={Advances in neural information processing systems},
  volume={28},
  year={2015}
}

@inproceedings{ross2011reduction,
      title={A Reduction of Imitation Learning and Structured Prediction to No-Regret Online Learning},
      author={Stephane Ross and Geoffrey J. Gordon and J. Andrew Bagnell},
      booktitle={Proceedings of the Fourteenth International Conference on Artificial Intelligence and Statistics},
      year={2011},
}

@inproceedings{schulman2015trust,
      title={Trust Region Policy Optimization},
      author={John Schulman and Sergey Levine and Philipp Moritz and Michael I. Jordan and Pieter Abbeel},
      booktitle={Proceedings of the 32nd International Conference on Machine Learning},
      year={2015},
}

@inproceedings{zhang2016staleness,
      title={Staleness-aware {Async-SGD} for Distributed Deep Learning},
      author={Wei Zhang and Suyog Gupta and Xiangru Lian and Ji Liu},
      booktitle={Proceedings of the Twenty-Fifth International Joint Conference on Artificial Intelligence},
      year={2016},
}

@inproceedings{mnih2016asynchronous,
      title={Asynchronous Methods for Deep Reinforcement Learning},
      author={Volodymyr Mnih and Adrià Puigdomènech Badia and Mehdi Mirza and Alex Graves and Timothy P. Lillicrap and Tim Harley and David Silver and Koray Kavukcuoglu},
      booktitle={Proceedings of the 33rd International Conference on Machine Learning},
      year={2016},
}

@article{schulman2017proximal,
      title={Proximal Policy Optimization Algorithms},
      author={John Schulman and Filip Wolski and Prafulla Dhariwal and Alec Radford and Oleg Klimov},
      journal={arXiv preprint arXiv:1707.06347},
      year={2017},
}

@article{hilton2021batch,
  title={Batch Size-Invariance for Policy Optimization},
  author={Hilton, Jacob and Cobbe, Karl and Schulman, John},
  journal={Advances in Neural Information Processing Systems},
  volume={35},
  pages={17086--17098},
  year={2022},
  note={Introduces decoupled PPO by separating the proximal policy for update control from the behavior policy for off-policy correction}
}

@misc{liu2025when,
  title={When Speed Kills Stability: Demystifying {RL} Collapse from the Training-Inference Mismatch},
  author={Liu, Jiacai and Li, Yingru and Fu, Yuqian and Wang, Jiawei and Liu, Qian and Shen, Yu},
  year={2025},
  month=sep,
  howpublished={\url{https://richardli.xyz/rl-collapse}},
  note={Research blog post, accessed 2026-05-06}
}

@article{li2025trust,
  title={Trust Region Masking for Long-Horizon {LLM} Reinforcement Learning},
  author={Li, Yingru and Liu, Jiacai and Xu, Jiawei and Tong, Yuxuan and Li, Ziniu and Liu, Qian and Wang, Baoxiang},
  journal={arXiv preprint arXiv:2512.23075},
  year={2025}
}

@article{zhang2026fast,
      title={Fast and Effective On-policy Distillation from Reasoning Prefixes},
      author={Dongxu Zhang and Zhichao Yang and Sepehr Janghorbani and Jun Han and Andrew Ressler II and Qian Qian and Gregory D. Lyng and Sanjit Singh Batra and Robert E. Tillman},
      journal={arXiv preprint arXiv:2602.15260},
      year={2026}
}

@article{song2026survey,
      title={A Survey of On-Policy Distillation for Large Language Models},
      author={Mingyang Song and Mao Zheng},
      journal={arXiv preprint arXiv:2604.00626},
      year={2026}
}

@article{ye2025blackbox,
      title={Black-Box On-Policy Distillation of Large Language Models},
      author={Tianzhu Ye and Li Dong and Zewen Chi and Xun Wu and Shaohan Huang and Furu Wei},
      journal={arXiv preprint arXiv:2511.10643},
      year={2025}
}

@article{jin2026entropy,
      title={Entropy-Aware On-Policy Distillation of Language Models},
      author={Woogyeol Jin and Taywon Min and Yongjin Yang and Swanand Ravindra Kadhe and Yi Zhou and Dennis Wei and Nathalie Baracaldo and Kimin Lee},
      journal={arXiv preprint arXiv:2603.07079},
      year={2026},
      note={Also available on OpenReview as SPOT 2026}
}

@inproceedings{espeholt2018impala,
      title={{IMPALA}: Scalable Distributed Deep-RL with Importance Weighted Actor-Learner Architectures},
      author={Lasse Espeholt and Hubert Soyer and Remi Munos and Karen Simonyan and Volodymir Mnih and Tom Ward and Yotam Doron and Vlad Firoiu and Tim Harley and Iain Dunning and Shane Legg and Koray Kavukcuoglu},
      booktitle={Proceedings of the 35th International Conference on Machine Learning},
      year={2018},
}

@inproceedings{harlap2019pipedream,
      title={{PipeDream}: Fast and Efficient Pipeline Parallel {DNN} Training},
      author={Aaron Harlap and Deepak Narayanan and Amar Phanishayee and Vivek Seshadri and Nikhil Devanur and Greg Ganger and Phil Gibbons},
      booktitle={Proceedings of the 27th ACM Symposium on Operating Systems Principles},
      year={2019},
}

@article{touvron2023llama2,
  title={{Llama 2}: Open Foundation and Fine-Tuned Chat Models},
  author={Hugo Touvron and Louis Martin and Kevin Stone and Peter Albert and Amjad Almahairi and Yasmine Babaei and Nikolay Bashlykov and Soumya Batra and Prajjwal Bhargava and Shruti Bhosale and Dan Bikel and Lukas Blecher and Cristian Canton Ferrer and Moya Chen and Guillem Cucurull and David Esiobu and Jude Fernandes and Jeremy Fu and Wenyin Fu and Brian Fuller and Cynthia Gao and Vedanuj Goswami and Naman Goyal and Anthony Hartshorn and Saghar Hosseini and Rui Hou and Hakan Inan and Marcin Kardas and Viktor Kerkez and Madian Khabsa and Isabel Kloumann and Artem Korenev and Punit Singh Koura and Marie-Anne Lachaux and Thibaut Lavril and Jenya Lee and Diana Liskovich and Yinghai Lu and Yuning Mao and Xavier Martinet and Todor Mihaylov and Pushkar Mishra and Igor Molybog and Yixin Nie and Andrew Poulton and Jeremy Reizenstein and Rashi Rungta and Kalyan Saladi and Alan Schelten and Ruan Silva and Eric Michael Smith and Ranjan Subramanian and Xiaoqing Ellen Tan and Binh Tang and Ross Taylor and Adina Williams and Jian Xiang Kuan and Puxin Xu and Zheng Yan and Iliyan Zarov and Yuchen Zhang and Angela Fan and Melanie Kambadur and Sharan Narang and Aurelien Rodriguez and Robert Stojnic and Sergey Edunov and Thomas Scialom},
  journal={arXiv preprint arXiv:2307.09288},
  year={2023}
}

@article{shao2024deepseekmath,
  title={DeepSeekMath: Pushing the limits of mathematical reasoning in open language models},
  author={Shao, Zhihong and Wang, Peiyi and Zhu, Qihao and Xu, Runxin and Song, Junxiao and Bi, Xiao and Zhang, Haowei and Zhang, Mingchuan and Li, YK and Wu, Yang and others},
  journal={arXiv preprint arXiv:2402.03300},
  year={2024}
}

@misc{deepseek2026deepseekv4,
  title={DeepSeek-V4: Towards Highly Efficient Million-Token Context Intelligence},
  author={{DeepSeek-AI}},
  year={2026},
  howpublished={\url{https://huggingface.co/deepseek-ai/DeepSeek-V4-Pro/blob/main/DeepSeek_V4.pdf}},
  note={technical report}
}

@misc{nemo-rl,
title = {NeMo RL: A Scalable and Efficient Post-Training Library},
howpublished = {\url{https://github.com/NVIDIA-NeMo/RL}},
author={{Nvidia}},
year = {2025},
note = {GitHub repository},
}

@article{yang2025qwen3,
  title={Qwen3 Technical Report},
  author={Yang, An and Li, Anfeng and Yang, Baosong and Zhang, Beichen and Hui, Binyuan and Zheng, Bo and Yu, Bowen and Gao, Chang and Huang, Chengen and Lv, Chenxu and others},
  journal={arXiv preprint arXiv:2505.09388},
  year={2025}
}

@article{xiao2026mimov2flash,
  title={MiMo-V2-Flash Technical Report},
  author={Xiao, Bangjun and Xia, Bingquan and Yang, Bo and Gao, Bofei and Shen, Bowen and Zhang, Chen and He, Chenhong and Lou, Chiheng and Luo, Fuli and Wang, Gang and others},
  journal={arXiv preprint arXiv:2601.02780},
  year={2026}
}

@article{zeng2026glm5,
  title={GLM-5: from Vibe Coding to Agentic Engineering},
  author={Zeng, Aohan and Lv, Xin and Hou, Zhenyu and Du, Zhengxiao and Zheng, Qinkai and Chen, Bin and Yin, Da and Ge, Chendi and Huang, Chenghua and Xie, Chengxing and others},
  journal={arXiv preprint arXiv:2602.15763},
  year={2026}
}

@misc{lu2025onpolicy,
  author = {Lu, Kevin and Thinking Machines Lab},
  title = {On-Policy Distillation},
  year = {2025},
  howpublished = {\url{https://thinkingmachines.ai/blog/on-policy-distillation/}},
  note = {Thinking Machines Lab blog post, published 2025-10-27, accessed 2026-05-06}
}

@misc{bytedtsinghua2025dapomath17k,
  author = {{BytedTsinghua-SIA}},
  title = {{DAPO-Math-17k}},
  year = {2025},
  howpublished = {\url{https://huggingface.co/datasets/BytedTsinghua-SIA/DAPO-Math-17k}},
  note = {Hugging Face dataset repository, accessed 2026-05-06}
}

@article{yu2025dapo,
  title={DAPO: An Open-Source LLM Reinforcement Learning System at Scale},
  author={Yu, Qiying and Zhang, Zheng and Zhu, Ruofei and Yuan, Yufeng and Zuo, Xiaochen and Yue, Yu and Dai, Weinan and Fan, Tiantian and Liu, Gaohong and Liu, Lingjun and others},
  journal={arXiv preprint arXiv:2503.14476},
  year={2025}
}

@article{cui2025entropy,
  title={The Entropy Mechanism of Reinforcement Learning for Reasoning Language Models},
  author={Cui, Ganqu and Zhang, Yuchen and Chen, Jiacheng and Yuan, Lifan and Wang, Zhi and Zuo, Yuxin and Li, Haozhan and Fan, Yuchen and Chen, Huayu and Chen, Weize and others},
  journal={arXiv preprint arXiv:2505.22617},
  year={2025}
}

@inproceedings{yang2024sweagent,
  title={{SWE}-agent: Agent-Computer Interfaces Enable Automated Software Engineering},
  author={Yang, John and Jimenez, Carlos E. and Wettig, Alexander and Lieret, Kilian and Yao, Shunyu and Narasimhan, Karthik R. and Press, Ofir},
  booktitle={Advances in Neural Information Processing Systems},
  year={2024},
  url={https://arxiv.org/abs/2405.15793},
  note={Recommended citation for mini-SWE-agent from the project repository}
}

@article{zhang2025relax,
  title={ReLaX: Reasoning with Latent Exploration for Large Reasoning Models},
  author={Zhang, Shimin and Chen, Xianwei and Shen, Yufan and Ye, Ziyuan and Wu, Jibin},
  journal={arXiv preprint arXiv:2512.07558},
  year={2025}
}

@article{feng2025retool,
  title={{ReTool}: Reinforcement Learning for Strategic Tool Use in {LLMs}},
  author={Jiazhan Feng and Shijue Huang and Xingwei Qu and Ge Zhang and Yujia Qin and Baoquan Zhong and Chengquan Jiang and Jinxin Chi and Wanjun Zhong},
  journal={arXiv preprint arXiv:2504.11536},
  year={2025}
}

@inproceedings{yang2025swesmith,
  title={{SWE-smith}: Scaling Data for Software Engineering Agents},
  author={Yang, John and Lieret, Kilian and Jimenez, Carlos E. and Wettig, Alexander and Khandpur, Kabir and Zhang, Yanzhe and Hui, Binyuan and Press, Ofir and Schmidt, Ludwig and Yang, Diyi},
  booktitle={Advances in Neural Information Processing Systems},
  year={2025},
  note={Datasets and Benchmarks Track Spotlight},
  url={https://openreview.net/forum?id=63iVrXc8cC}
}

@inproceedings{pan2025training,
  title={Training Software Engineering Agents and Verifiers with {SWE-Gym}},
  author={Jiayi Pan and Xingyao Wang and Graham Neubig and Navdeep Jaitly and Heng Ji and Alane Suhr and Yizhe Zhang},
  booktitle={Proceedings of the 42nd International Conference on Machine Learning},
  year={2025},
  note={arXiv:2412.21139}
}

@article{sheng2024hybridflow,
  title   = {HybridFlow: A Flexible and Efficient RLHF Framework},
  author  = {Guangming Sheng and Chi Zhang and Zilingfeng Ye and Xibin Wu and Wang Zhang and Ru Zhang and Yanghua Peng and Haibin Lin and Chuan Wu},
  year    = {2024},
  journal = {arXiv preprint arXiv:2409.19256}
}

@article{cao2025skyrlagent,
  title={SkyRL-Agent: Efficient RL Training for Multi-turn LLM Agent},
  author={Cao, Shiyi and Li, Dacheng and Zhao, Fangzhou and Yuan, Shuo and Hegde, Sumanth R. and Chen, Connor and Ruan, Charlie and Griggs, Tyler and Liu, Shu and Tang, Eric and Liaw, Richard and Moritz, Philipp and Zaharia, Matei and Gonzalez, Joseph E. and Stoica, Ion},
  journal={arXiv preprint arXiv:2511.16108},
  year={2025}
}

@article{cao2026qwen3codernext,
  title={Qwen3-Coder-Next Technical Report},
  author={Cao, Ruisheng and Chen, Mouxiang and Chen, Jiawei and Cui, Zeyu and Feng, Yunlong and Hui, Binyuan and Jing, Yuheng and Li, Kaixin and Li, Mingze and Lin, Junyang and others},
  journal={arXiv preprint arXiv:2603.00729},
  year={2026}
}

@misc{luo2025deepswe,
  author = {Luo, Michael and Jain, Naman and Singh, Jaskirat and Tan, Sijun and Cai, Colin and Venkat, Tarun and Roongta, Manan and Li, Li Erran and Popa, Raluca Ada and Sen, Koushik and Stoica, Ion and Patel, Ameen and Wu, Qingyang and Ariyak, Alpay and Zhu, Shang and Athiwaratkun, Ben and Zhang, Ce},
  title = {DeepSWE: Training a Fully Open-sourced, State-of-the-Art Coding Agent by Scaling RL},
  year = {2025},
  howpublished = {\url{https://pretty-radio-b75.notion.site/DeepSWE-Training-a-Fully-Open-sourced-State-of-the-Art-Coding-Agent-by-Scaling-RL-22281902c1468193aabbe9a8c59bbe33}},
  note = {Technical blog post, accessed 2026-05-06}
}

@misc{zhu2025slime,
  author = {Zhu, Zilin and Xie, Chengxing and Lv, Xin and slime Contributors},
  title = {slime: An LLM Post-Training Framework for RL Scaling},
  year = {2025},
  howpublished = {\url{https://github.com/THUDM/slime}},
  note = {GitHub repository, accessed 2026-05-06}
}

@article{sang2026crisp,
  title={CRISP: Compressed Reasoning via Iterative Self-Policy Distillation},
  author={Sang, Hejian and Xu, Yuanda and Zhou, Zhengze and He, Ran and Wang, Zhipeng and Sun, Jiachen},
  journal={arXiv preprint arXiv:2603.05433},
  year={2026}
}

@article{zhang2026opsdl,
  title={OPSDL: On-Policy Self-Distillation for Long-Context Language Models},
  author={Zhang, Xinsen and Ding, Zhenkai and Pan, Tianjun and Yang, Run and Kang, Chun and Xiong, Xue and Gu, Jingnan},
  journal={arXiv preprint arXiv:2604.17535},
  year={2026}
}

@article{hubotter2026reinforcement,
  title={Reinforcement Learning via Self-Distillation},
  author={H{\"u}botter, Jonas and L{\"u}beck, Frederike and Behric, Lejs and Baumann, Anton and Bagatella, Marco and Marta, Daniel and Hakimi, Ido and Shenfeld, Idan and Buening, Thomas Kleine and Guestrin, Carlos and others},
  journal={arXiv preprint arXiv:2601.20802},
  year={2026}
}

@article{zhao2026selfdistilledreasoner,
  title={Self-Distilled Reasoner: On-Policy Self-Distillation for Large Language Models},
  author={Zhao, Siyan and Xie, Zhihui and Liu, Mengchen and Huang, Jing and Pang, Guan and Chen, Feiyu and Grover, Aditya},
  journal={arXiv preprint arXiv:2601.18734},
  year={2026}
}

@inproceedings{hendrycks2021measuring,
      title={Measuring Mathematical Problem Solving with the {MATH} Dataset},
      author={Dan Hendrycks and Collin Burns and Saurav Kadavath and Akul Arora and Steven Basart and Eric Tang and Dawn Song and Jacob Steinhardt},
      booktitle={Proceedings of the Neural Information Processing Systems Track on Datasets and Benchmarks},
      year={2021},
}

@article{chen2021evaluating,
      title={Evaluating Large Language Models Trained on Code},
      author={Mark Chen and Jerry Tworek and Heewoo Jun and Qiming Yuan and Henrique Ponde de Oliveira Pinto and Jared Kaplan and Harri Edwards and Yuri Burda and Nicholas Joseph and Greg Brockman and Alex Ray and Raul Puri and Gretchen Krueger and Michael Petrov and Heidy Khlaaf and Girish Sastry and Pamela Mishkin and Brooke Chan and Scott Gray and Nick Ryder and Mikhail Pavlov and Alethea Power and Lukasz Kaiser and Mohammad Bavarian and Clemens Winter and Philippe Tillet and Felipe Petroski Such and Dave Cummings and Matthias Plappert and Fotios Chantzis and Elizabeth Barnes and Ariel Herbert-Voss and William Hebgen Guss and Alex Nichol and Alex Paino and Nikolas Tezak and Jie Tang and Igor Babuschkin and Suchir Balaji and Shantanu Jain and William Saunders and Christopher Hesse and Andrew N. Carr and Jan Leike and Josh Achiam and Vedant Misra and Evan Morikawa and Alec Radford and Matthew Knight and Miles Brundage and Mira Murati and Katie Mayer and Peter Welinder and Bob McGrew and Dario Amodei and Sam McCandlish and Ilya Sutskever and Wojciech Zaremba},
      journal={arXiv preprint arXiv:2107.03374},
      year={2021},
}

@article{austin2021program,
      title={Program Synthesis with Large Language Models},
      author={Jacob Austin and Augustus Odena and Maxwell Nye and Maarten Bosma and Henryk Michalewski and David Dohan and Ellen Jiang and Carrie Cai and Michael Terry and Quoc Le and Charles Sutton},
      journal={arXiv preprint arXiv:2108.07732},
      year={2021},
}

@inproceedings{gu2024minillm,
      title={{MiniLLM}: On-Policy Distillation of Large Language Models},
      author={Yuxian Gu and Li Dong and Furu Wei and Minlie Huang},
      booktitle={International Conference on Learning Representations},
      year={2024},
}

@inproceedings{agarwal2024onpolicy,
      title={On-Policy Distillation of Language Models: Learning from Self-Generated Mistakes},
      author={Rishabh Agarwal and Nino Vieillard and Yongchao Zhou and Piotr Stanczyk and Sabela Ramos and Matthieu Geist and Olivier Bachem},
      booktitle={International Conference on Learning Representations},
      year={2024},
}

@article{zhou2024webarena,
      title={{WebArena}: A Realistic Web Environment for Building Autonomous Agents},
      author={Shuyan Zhou and Frank F. Xu and Hao Zhu and Xuhui Zhou and Robert Lo and Abishek Sridhar and Xianyi Cheng and Tianyue Ou and Yonatan Bisk and Daniel Fried and Uri Alon and Graham Neubig},
      journal={arXiv preprint arXiv:2307.13854},
      year={2024},
}

@inproceedings{liu2024agentbench,
      title={{AgentBench}: Evaluating {LLMs} as Agents},
      author={Xiao Liu and Hao Yu and Hanchen Zhang and Yifan Xu and Xuanyu Lei and Hanyu Lai and Yu Gu and Hangliang Ding and Kaiwen Men and Kejuan Yang and Shudan Zhang and Xiang Deng and Aohan Zeng and Zhengxiao Du and Chenhui Zhang and Sheng Shen and Tianjun Zhang and Yu Su and Huan Sun and Minlie Huang and Yuxiao Dong and Jie Tang},
      booktitle={International Conference on Learning Representations},
      year={2024},
}

@inproceedings{jimenez2024swebench,
      title={{SWE-bench}: Can Language Models Resolve Real-World {GitHub} Issues?},
      author={Carlos E. Jimenez and John Yang and Alexander Wettig and Shunyu Yao and Kexin Pei and Ofir Press and Karthik Narasimhan},
      booktitle={International Conference on Learning Representations},
      year={2024},
}

@inproceedings{koh2024visualwebarena,
      title={{VisualWebArena}: Evaluating Multimodal Agents on Realistic Visual Web Tasks},
      author={Jing Yu Koh and Robert Lo and Lawrence Jang and Vikram Duvvur and Ming Chong Lim and Po-Yu Huang and Graham Neubig and Shuyan Zhou and Ruslan Salakhutdinov and Daniel Fried},
      booktitle={Proceedings of the 62nd Annual Meeting of the Association for Computational Linguistics},
      year={2024},
}

@article{yang2025qwen25,
      title={{Qwen2.5} Technical Report},
      author={An Yang and Baosong Yang and Beichen Zhang and Binyuan Hui and Bo Zheng and Bowen Yu and Chengyuan Li and Dayiheng Liu and Fei Huang and Haoran Wei and Huan Lin and Jian Yang and Jianhong Tu and Jianwei Zhang and Jianxin Yang and Jiaxi Yang and Jingren Zhou and Junyang Lin and Kai Dang and Keming Lu and Keqin Bao and Kexin Yang and Le Yu and Mei Li and Mingfeng Xue and Pei Zhang and Qin Zhu and Rui Men and Runji Lin and Tianhao Li and Tianyi Tang and Tingyu Xia and Xingzhang Ren and Xuancheng Ren and Yang Fan and Yang Su and Yichang Zhang and Yu Wan and Yuqiong Liu and Zeyu Cui and Zhenru Zhang and Zihan Qiu},
      journal={arXiv preprint arXiv:2412.15115},
      year={2025},
}

@article{yang2026learning,
      title={Learning beyond Teacher: Generalized On-Policy Distillation with Reward Extrapolation},
      author={Wenkai Yang and Weijie Liu and Ruobing Xie and Kai Yang and Saiyong Yang and Yankai Lin},
      journal={arXiv preprint arXiv:2602.12125},
      year={2026},
}

@article{ye2026onpolicy,
      title={On-Policy Context Distillation for Language Models},
      author={Tianzhu Ye and Li Dong and Xun Wu and Shaohan Huang and Furu Wei},
      journal={arXiv preprint arXiv:2602.12275},
      year={2026},
}

@article{ko2026scaling,
      title={Scaling Reasoning Efficiently via Relaxed On-Policy Distillation},
      author={Jongwoo Ko and Sara Abdali and Young Jin Kim and Tianyi Chen and Pashmina Cameron},
      journal={arXiv preprint arXiv:2603.11137},
      year={2026},
}

@article{fu2026revisiting,
      title={Revisiting On-Policy Distillation: Empirical Failure Modes and Simple Fixes},
      author={Yuqian Fu and Haohuan Huang and Kaiwen Jiang and Yuanheng Zhu and Dongbin Zhao},
      journal={arXiv preprint arXiv:2603.25562},
      year={2026},
}
}

\newpage

\appendix
\section{Additional Results for Theoretical Analysis}
\label{appendix-sec:theory_proof}
In thi section, we provide the formal assumptions and proof sketches that support the theoretical analysis in Sec.~\ref{sec:theory} on objective discrepancy between synchronous and asynchronous OPD.

\subsection{Alternative Assumptions}

We reuse the notation from Sections~3 and~4. The current student policy at update $t$ is $\pi_{\theta}^t$, the rollout step attached to sample $i$ is $r(i)$, and the active stale buffer is $\mathcal{B}^t$. The ideal policy update objective is
\[
\mathcal{J}_{\mathrm{sync}}(\pi_{\theta}^t)
=
\mathbb{E}_{x\sim d^{t}}\bigl[\ell\bigl(\pi_{\theta}^t(\cdot\mid x), \pi_{\mathrm{teacher}}(\cdot\mid x)\bigr)\bigr],
\]
where $d^{t}$ denotes the mixture prefix distribution from Sec.~3. The stale-buffer objective is
\[
\mathcal{J}_{\mathrm{async}}(\pi_{\theta}^t)
=
\mathbb{E}_{i\sim\mathcal{B}^t}\bigl[\ell\bigl(\pi_{\theta}^t(\cdot\mid x_i), \pi_{\mathrm{teacher}}(\cdot\mid c_i^{r(i)})\bigr)\bigr].
\]
The discrepancy of interest is
\[
\Delta^t = \bigl|\mathcal{J}_{\mathrm{async}}(\pi_{\theta}^t)-\mathcal{J}_{\mathrm{sync}}(\pi_{\theta}^t)\bigr|.
\]

For the subsequent proof sketches, we first supplement the following assumptions.


\begin{assumption}[Total-variation stability of the loss]
\label{assump:tv-lipschitz}
On the truncated support from Assumption~\ref{assumption:regularity}, there exist constants $L_\pi,L_q\ge 0$ such that for any predictive distributions $p,p',q,q'$,
\[
|\ell(p,q)-\ell(p',q)| \le L_\pi\,\mathrm{TV}(p,p')
\qquad\text{and}\qquad
|\ell(p,q)-\ell(p,q')| \le L_q\,\mathrm{TV}(q,q').
\]
\end{assumption}

\begin{assumption}[Bounded one-step rollout drift on stale support]
\label{assump:step-drift}
For every update step $s$ and every stale training unit $x$ represented in the active buffer,
\[
\mathrm{TV}\!\left(\pi_{\theta}^{s+1}(\cdot\mid x),\pi_{\theta}^{s}(\cdot\mid x)\right) \le \varepsilon_s(x).
\]
\end{assumption}

Assumptions~\ref{assumption:regularity}, \ref{assump:tv-lipschitz} and \ref{assump:step-drift} are stronger than the unconstrained LLM setting and are introduced only to make a discrepancy analysis finite and interpretable. The resulting bounds should therefore be read as conservative control justifications rather than a complete optimization theory for asynchronous OPD.

\subsection{Sketch of the Objective Discrepancy Decomposition}
\label{appendix-sec:proof-decomposition}
In this section, we sketch the argument behind Proposition~\ref{prop:async-discrepancy}. 
\begin{proof}
Recall the intermediate objective defined in Eq.~\ref{eq:intermediate}
\[
\widetilde{\mathcal{J}}_{\mathrm{async}}(\pi_{\theta}^t)
=
\mathbb{E}_{x\sim d^{\mathrm{stale},t}}\bigl[\ell\bigl(\pi_{\theta}^t(\cdot\mid x), \pi_{\mathrm{teacher}}(\cdot\mid x)\bigr)\bigr],
\]
where $d^{\mathrm{stale},t}$ denotes the mixture stale-prefix distribution induced by the active buffer. Then we derive
\[
\Delta^t
\le
\bigl|\mathcal{J}_{\mathrm{async}}(\pi_{\theta}^t)-\widetilde{\mathcal{J}}_{\mathrm{async}}(\pi_{\theta}^t)\bigr|
+
\bigl|\widetilde{\mathcal{J}}_{\mathrm{async}}(\pi_{\theta}^t)-\mathcal{J}_{\mathrm{sync}}(\pi_{\theta}^t)\bigr|.
\]
The first term is a pure supervision-mismatch quantity. By Assumption~\ref{assump:tv-lipschitz},
\[
\bigl|\mathcal{J}_{\mathrm{async}}(\pi_{\theta}^t)-\widetilde{\mathcal{J}}_{\mathrm{async}}(\pi_{\theta}^t)\bigr|
\le
L_q\,\mathbb{E}_{i\sim\mathcal{B}^t}\bigl[\mathrm{TV}\!\left(\pi_{\mathrm{teacher}}(\cdot\mid c_i^t), \pi_{\mathrm{teacher}}(\cdot\mid c_i^{r(i)})\right)\bigr].
\]
The second term depends only on replacing the stale distributional  occupancy with the current occupancy, while keeping the current policy and labels fixed. Because the loss is bounded on the truncated support, standard change-of-measure arguments give
\[
\bigl|\widetilde{\mathcal{J}}_{\mathrm{async}}(\pi_{\theta}^t)-\mathcal{J}_{\mathrm{sync}}(\pi_{\theta}^t)\bigr|
\le
L_{\mathrm{occ}}\,\mathrm{TV}(d^{t},d^{\mathrm{stale},t}),
\]
for some constant $L_{\mathrm{occ}}$ depending only on the bounded support and loss normalization. Writing $d^{\mathrm{stale},t}=\mathbb{E}_{i\sim\mathcal{B}^t}[d^{r(i)}]$, convexity of total variation further gives
\[
\mathrm{TV}(d^{t},d^{\mathrm{stale},t})
\le
\mathbb{E}_{i\sim\mathcal{B}^t}\!\left[\mathrm{TV}(d^{t},d^{r(i)})\right].
\]
This is why the student-side discrepancy may also be interpreted as the buffer-average staleness induced by the sample-specific rollout steps. Combining the two inequalities yields Proposition~\ref{prop:async-discrepancy}.
\end{proof}

The consequence is important: the theoretical discrepancy is a two-term statement---rollout drift and supervision drift. 




\subsection{Relationship between Policy Update Lag and Rollout Drift}
\label{appendix-sec:lag-bound}
The following lemma shows that, under a mild step-wise smoothness assumption, the per-sample rollout drift can be upper-bounded by the number of update steps separating the rollout from the current student—an integer that is essentially free to track during training.

\begin{lemma}[Lag bounds rollout drift]
\label{lem:lag-proxy}
Under Assumption~\ref{assump:step-drift}, for every buffered sample $i \in \mathcal{B}^t$,
\[
\delta_i^{\mathrm{rollout}}
\;\le\;
\sum_{s=r(i)}^{t-1} \varepsilon_s(x_i),
\]
where the right-hand side accumulates over exactly $\tau_i := t - r(i)$ steps.
\end{lemma}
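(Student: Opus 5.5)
The plan is a telescoping argument combined with the triangle inequality for total variation. Here $\delta_i^{\mathrm{rollout}}$ is identified with the rollout diagnostic $\delta_i^{\mathrm{roll}} = \mathrm{TV}\bigl(\pi_{\theta}^{t}(\cdot\mid x_i),\, \pi_{\theta}^{r(i)}(\cdot\mid x_i)\bigr)$ from Sec.~\ref{sec:diagnostics}, evaluated on the stored prefix $x_i$.

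The key observation is that $x_i$ is held fixed across all intermediate policies $\pi_{\theta}^{s}$ for $r(i)\le s\le t$. The comparison is therefore purely between conditional next-token distributions on a single prefix, and no occupancy or compounding effect enters.

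\textbf{Step 1 (degenerate case).} If $\tau_i = 0$, then $t=r(i)$. Both sides vanish: the left side is $\mathrm{TV}$ of a distribution with itself, and the right side is an empty sum. The claim holds trivially.

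\textbf{Step 2 (telescoping).} For $\tau_i\ge 1$, insert the intermediate policies $\pi_{\theta}^{r(i)},\pi_{\theta}^{r(i)+1},\dots,\pi_{\theta}^{t}$. Since $\mathrm{TV}$ is a metric on distributions over the finite truncated support of Assumption~\ref{assumption:regularity}, repeated use of the triangle inequality gives
\[
\mathrm{TV}\!\left(\pi_{\theta}^{t}(\cdot\mid x_i),\pi_{\theta}^{r(i)}(\cdot\mid x_i)\right)
\le
\sum_{s=r(i)}^{t-1}\mathrm{TV}\!\left(\pi_{\theta}^{s+1}(\cdot\mid x_i),\pi_{\theta}^{s}(\cdot\mid x_i)\right).
\]
This can be proved formally by induction on $\tau_i$, adding one step at a time.

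\textbf{Step 3 (per-step bounds).} Apply Assumption~\ref{assump:step-drift} with $x=x_i$ to each summand, which bounds it by $\varepsilon_s(x_i)$. The sum has exactly $t-r(i)=\tau_i$ terms.

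The only point needing care is the hypothesis of Assumption~\ref{assump:step-drift}. It must apply to the prefix $x_i$ at every intermediate step $s\in\{r(i),\dots,t-1\}$, not only at steps where sample $i$ sits in the active buffer. I would read the assumption's "every update step $s$ and every stale training unit $x$ represented in the active buffer" as quantifying over all $s$ for the fixed unit $x_i\in\mathcal{B}^t$. Should a stricter reading be intended, I would note that $x_i$ has been present in the buffer since step $r(i)$, unless a refresh removed it, in which case it would not belong to $\mathcal{B}^t$. Under either reading the assumption covers every step in the sum.

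If the lemma is meant with token-level prefixes $x_{i,h}$, the same argument applies positionwise. Averaging over $h\in\mathcal{U}_i^{\mathrm{roll}}$ then preserves the inequality, with $\varepsilon_s$ replaced by its positional average.
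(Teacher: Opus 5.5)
Your proof is correct and matches the paper's: the triangle inequality for total variation splits $\delta_i^{\mathrm{rollout}}$ into the $\tau_i$ one-step drifts on the fixed prefix $x_i$, and Assumption~\ref{assump:step-drift} bounds each one by $\varepsilon_s(x_i)$. The paper applies that assumption at every intermediate step $s$ without comment, so your first reading of its quantifier is the one it uses. Your fallback, that $x_i$ has been in the buffer since step $r(i)$, is therefore unnecessary, and it is not obviously true, since in an asynchronous pipeline a sample may enter the buffer only after it has been graded.
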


\begin{proof}
By the triangle inequality for total variation,
\[
\delta_i^{\mathrm{rollout}}
=
\mathrm{TV}\!\left(\pi_{\theta}^t(\cdot\mid x_i),\,\pi_{\theta}^{r(i)}(\cdot\mid x_i)\right)
\le
\sum_{s=r(i)}^{t-1}
\mathrm{TV}\!\left(\pi_{\theta}^{s+1}(\cdot\mid x_i),\,\pi_{\theta}^{s}(\cdot\mid x_i)\right).
\]
Applying Assumption~\ref{assump:step-drift} to each summand yields the claim.
\end{proof}

Update lag is not a divergence per se, but a budget: it bounds rather than measures policy drift, with the bound becoming looser as per-step errors accumulate. Hence, $\tau_i$ is incorporated in our computation of the sample-level freshness score in Eq.~\ref{eq:freshness}.

\subsection{Rollout Drift Compounds with Horizon}
\label{appendix-sec:rollout_drift_horizon}
We further analyze how rollout drift scales with horizon length, which is particularly important for practical OPD deployment in long-horizon agentic scenarios.

\begin{proposition}[Horizon compounding of rollout staleness]
\label{prop:horizon-compounding}
Consider a finite-horizon setting with horizon $H$. Suppose that perturbing the policy on a prefix by total variation at most $\delta$ perturbs the next-step prefix distribution by at most $L \delta$. Then there exists a constant $K_H = O(HL)$ such that
\[
\mathrm{TV}\!\left(d^{t},\, d^{\mathrm{stale},t}\right)
\le
K_H\,\mathbb{E}_{i\sim\mathcal{B}^t}
\!\left[\delta_i^{\mathrm{roll}}\right]
\le
K_H\,\mathbb{E}_{i\sim\mathcal{B}^t}
\!\left[\sqrt{\tfrac{1}{2}\, D_i^{\mathrm{roll}}}\right].
\]
\end{proposition}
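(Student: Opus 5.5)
The plan is a standard telescoping (simulation-lemma) argument over depth. First I reduce to a single buffered sample, then unroll the prefix distribution depth by depth, and finally apply Pinsker.

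\textbf{Reduction to one sample.} By the convexity bound already recorded after Proposition~\ref{prop:async-discrepancy},
\[
\mathrm{TV}(d^t,d^{\mathrm{stale},t})\le\mathbb{E}_{i\sim\mathcal{B}^t}\bigl[\mathrm{TV}(d^t,d^{r(i)})\bigr],
\]
so it suffices to prove the per-sample bound $\mathrm{TV}(d^t,d^{r(i)})\le K_H\,\delta_i^{\mathrm{roll}}$. Here I read the hypothesis of the proposition so that $\delta_i^{\mathrm{roll}}$ controls the policy discrepancy $\mathrm{TV}(\pi_\theta^t(\cdot\mid x),\pi_\theta^{r(i)}(\cdot\mid x))$ on the prefixes relevant to sample $i$. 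This is the same on-support reading the paper uses in Sec.~\ref{sec:diagnostics}, with the token-level average over $\mathcal{U}_i^{\mathrm{roll}}$ as a representative per-prefix drift.

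\textbf{Depth-wise recursion.} Write $d^t=\frac1H\sum_{h=1}^H d^t_h$, which gives $\mathrm{TV}(d^t,d^{r(i)})\le\frac1H\sum_h\mathrm{TV}(d^t_h,d^{r(i)}_h)$. Let $P^s_h$ denote the transition kernel sending depth-$h$ prefixes to depth-$(h+1)$ prefixes under $\pi_\theta^s$. Then
\[
d^t_{h+1}-d^{r(i)}_{h+1}=(d^t_h-d^{r(i)}_h)P^t_h+d^{r(i)}_h(P^t_h-P^{r(i)}_h).
\]
The first term has TV at most $\mathrm{TV}(d^t_h,d^{r(i)}_h)$, because Markov kernels are TV-contractions. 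By the proposition's hypothesis, the second term has TV at most $L\,\delta_i^{\mathrm{roll}}$. Since both runs start from the same prompt distribution, $e_1=0$, and $e_h:=\mathrm{TV}(d^t_h,d^{r(i)}_h)$ satisfies $e_{h+1}\le e_h+L\delta_i^{\mathrm{roll}}$. Hence $e_h\le (h-1)L\delta_i^{\mathrm{roll}}$.

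\textbf{Summing and Pinsker.} Averaging over depths gives $\mathrm{TV}(d^t,d^{r(i)})\le \frac{H-1}{2}L\,\delta_i^{\mathrm{roll}}$, so one can take $K_H=\max\{1,(H-1)L/2\}=O(HL)$. If one prefers the occupancy-sum convention rather than the mixture, the sum of the $e_h$ gives $O(H^2L)$; under the mixture definition of $d^t$ stated in Sec.~3 the constant is $O(HL)$. Taking expectation over $i\sim\mathcal{B}^t$ gives the first inequality. The second follows termwise from Pinsker, $\delta_i^{\mathrm{roll}}\le\sqrt{\tfrac12 D_i^{\mathrm{roll}}}$, as noted in Sec.~\ref{sec:diagnostics}; if $D_i^{\mathrm{roll}}$ is read as the token average, Jensen applied to the concave square root preserves the direction.

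\textbf{Main obstacle.} The hardest step is justifying that the per-step perturbation $d_h^{r(i)}(P^t_h-P^{r(i)}_h)$ is controlled by $\delta_i^{\mathrm{roll}}$. The drift is measured only on the buffered prefixes $x_{i,h}$, whereas the perturbation is an average over $d_h^{r(i)}$. I would handle this by absorbing it into the hypothesis: interpret ``perturbing the policy on a prefix by at most $\delta$'' as a uniform-on-support bound, and take $\delta_i^{\mathrm{roll}}$ as the drift on the support of $d^{r(i)}$. This matches the paper's use of Assumption~\ref{assump:step-drift}, which is likewise stated on stale support.
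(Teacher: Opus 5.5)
Your depth-wise telescoping is the same mechanism the paper invokes (``recursively unrolling the perturbation through the trajectory''), written out explicitly. The recursion, the mixture average and the resulting $O(HL)$ constant are all fine.

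The gap is in how you resolve the obstacle you flag. To get the deterministic per-sample bound $\mathrm{TV}(d^t,d^{r(i)})\le K_H\,\delta_i^{\mathrm{roll}}$, you take $\delta_i^{\mathrm{roll}}$ to be a uniform (supremum) drift over the support of $d^{r(i)}$. The second inequality, however, needs $\delta_i^{\mathrm{roll}}$ to be the quantity Pinsker actually controls. Since $D_i^{\mathrm{roll}}$ is the KL averaged over the aligned positions of the single stored trajectory, Pinsker plus Jensen only yields $\frac{1}{|\mathcal{U}_i^{\mathrm{roll}}|}\sum_{h\in\mathcal{U}_i^{\mathrm{roll}}}\mathrm{TV}\bigl(\pi_\theta^t(\cdot\mid x_{i,h}),\pi_\theta^{r(i)}(\cdot\mid x_{i,h})\bigr)\le\sqrt{\tfrac12 D_i^{\mathrm{roll}}}$. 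A supremum over prefixes the buffer never visited can be large while $D_i^{\mathrm{roll}}=0$. So your chain changes the meaning of $\delta_i^{\mathrm{roll}}$ between the two inequalities, and as written the second one fails.

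You do not need a uniform bound. Keep the hypothesis pointwise:
$\mathrm{TV}\bigl(P_h^t(x,\cdot),P_h^{r(i)}(x,\cdot)\bigr)\le L\,\mathrm{TV}\bigl(\pi_\theta^t(\cdot\mid x),\pi_\theta^{r(i)}(\cdot\mid x)\bigr)$.
Your error term $d_h^{r(i)}(P_h^t-P_h^{r(i)})$ then has total variation at most $L\varepsilon_h$, where $\varepsilon_h:=\mathbb{E}_{x\sim d_h^{r(i)}}\bigl[\mathrm{TV}\bigl(\pi_\theta^t(\cdot\mid x),\pi_\theta^{r(i)}(\cdot\mid x)\bigr)\bigr]$ is an average under the \emph{stale} occupancy. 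The recursion gives $e_h\le L\sum_{h'<h}\varepsilon_{h'}$. Averaging over depths,
$\mathrm{TV}(d^t,d^{r(i)})\le\frac{L}{H}\sum_{h'=1}^{H-1}(H-h')\varepsilon_{h'}\le HL\cdot\frac1H\sum_{h'=1}^{H}\varepsilon_{h'}$.

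The stored prefixes $x_{i,h}=(x_i,y_{i,<h})$ were generated by $\pi_\theta^{r(i)}$, so they are draws from $d_h^{r(i)}$. With every position aligned and a common horizon, the token-averaged $\delta_i^{\mathrm{roll}}$ is therefore an unbiased estimate of $\frac1H\sum_{h'}\varepsilon_{h'}$. The first inequality then holds with $K_H=HL$ in expectation over the buffer draw. The second follows from Pinsker and Jensen applied to the same token-averaged $\delta_i^{\mathrm{roll}}$. The paper's sketch does not address this point either, but this is the reading under which both inequalities hold at once.
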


The main-text horizon proposition can be interpreted as a sequential perturbation bound on the mixture prefix distributions induced by a finite-horizon rollout. Let $d^{\pi}$ denote the resulting mixture distribution over prefixes under policy $\pi$. Suppose that replacing one policy by another changes the next-step prefix distribution by at most $L$ times the total-variation distance between the two policies on the current prefix. Then, by recursively unrolling the perturbation through the trajectory,
\[
\mathrm{TV}\!\left(d^{t},d^{\mathrm{stale},t}\right)
\le
K_H\,\mathbb{E}_{i\sim\mathcal{B}^t}\bigl[\delta_i^{\mathrm{roll}}\bigr]
\]
for some constant $K_H = O(HL)$ depending on horizon and the one-step sensitivity constant. This is the same compounding mechanism familiar from sequential prediction and imitation learning: small early action differences alter later prefixes, which in turn alter the observations, tool outputs, and teacher-conditioning contexts seen downstream.

In asynchronous OPD, this means that rollout drift and supervision drift need not be independent at long horizons; stale early actions can create the context drift that later enlarges $D_i^{\mathrm{sup}}$.

\subsection{Additional Analysis for Supervision Drift}
\label{appendix-sec:sup_drift_analysis}

The teacher-side staleness is defined in the sample-level shorthand from Sec.~\ref{sec:theory} by
\[
D_i^{\mathrm{sup}} = \mathrm{KL}\!\left(\pi_{\mathrm{teacher}}(\cdot\mid c_i^t)\,\|\,\pi_{\mathrm{teacher}}(\cdot\mid c_i^{r(i)})\right).
\]
In Sec.~\ref{sec:method}, this shorthand is instantiated by averaging token-level teacher comparisons over aligned positions. Here $\pi_{\mathrm{teacher}}(\cdot\mid c_i^{r(i)})$ is the teacher distribution actually cached and consumed during training, while $\pi_{\mathrm{teacher}}(\cdot\mid c_i^{t})$ is the teacher distribution obtained by relabeling the same training unit under the current context. The key point is that this term measures supervision drift, not teacher-parameter drift. It is enough for the effective teacher to change the context.

\begin{proposition}[Context mismatch perturbs the objective]
\label{prop:teacher-context}
Under Assumption~\ref{assump:tv-lipschitz}, the loss perturbation induced by stale teacher labels satisfies
\[
\begin{aligned}
&\bigl|\ell\bigl(\pi_{\theta}^t(\cdot\mid x_i), \pi_{\mathrm{teacher}}(\cdot\mid c_i^{t})\bigr) - \ell\bigl(\pi_{\theta}^t(\cdot\mid x_i), \pi_{\mathrm{teacher}}(\cdot\mid c_i^{r(i)})\bigr)\bigr| \\
&\le
L_q\,\mathrm{TV}\!\left(\pi_{\mathrm{teacher}}(\cdot\mid c_i^{t}), \pi_{\mathrm{teacher}}(\cdot\mid c_i^{r(i)})\right),
\end{aligned}
\]
and therefore the buffer-averaged supervision drift is bounded by a constant times the mean supervision-side total variation. By Pinsker, this is, in turn, controlled by the square root of the mean supervision KL.
\end{proposition}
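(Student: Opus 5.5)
The pointwise inequality is a direct instance of the second half of Assumption~\ref{assump:tv-lipschitz}. I fix the buffered sample $i$ and set $p := \pi_{\theta}^t(\cdot\mid x_i)$, $q := \pi_{\mathrm{teacher}}(\cdot\mid c_i^{t})$ and $q' := \pi_{\mathrm{teacher}}(\cdot\mid c_i^{r(i)})$. Before applying the assumption, I check that all three distributions lie on the truncated support of Assumption~\ref{assumption:regularity}. This holds by construction, since the loss is only ever evaluated there. The Lipschitz-in-$q$ inequality then gives $|\ell(p,q)-\ell(p,q')|\le L_q\,\mathrm{TV}(q,q')$, which is exactly the displayed bound. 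The student argument $p$ is held fixed throughout, so $L_\pi$ never enters.

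For the buffer-averaged statement, I take expectations over $i\sim\mathcal{B}^t$. I define the supervision-induced perturbation as $\bigl|\mathbb{E}_{i\sim\mathcal{B}^t}[\ell(p_i,q_i)-\ell(p_i,q_i')]\bigr|$, which is the first term in the proof sketch of Proposition~\ref{prop:async-discrepancy}. By Jensen (the absolute value is convex), this is at most $\mathbb{E}_{i}\bigl[|\ell(p_i,q_i)-\ell(p_i,q_i')|\bigr]$. Applying the pointwise bound inside the expectation then gives at most $L_q\,\mathbb{E}_i[\delta_i^{\mathrm{sup}}]$. This is the claimed ``constant times the mean supervision-side total variation,'' with constant $L_q$, matching $C_{\mathrm{sup}}$ in Eq.~\ref{eq:decomposition}.

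Finally, I handle the KL control. Pinsker's inequality gives $\delta_i^{\mathrm{sup}}\le\sqrt{\tfrac12 D_i^{\mathrm{sup}}}$ for each $i$. Since $u\mapsto\sqrt{u}$ is concave, Jensen again yields $\mathbb{E}_i\bigl[\sqrt{\tfrac12 D_i^{\mathrm{sup}}}\bigr]\le\sqrt{\tfrac12\,\mathbb{E}_i[D_i^{\mathrm{sup}}]}$. The full chain is therefore
\[
\Bigl|\mathbb{E}_{i}\bigl[\ell(p_i,q_i)-\ell(p_i,q_i')\bigr]\Bigr|\le L_q\,\mathbb{E}_i[\delta_i^{\mathrm{sup}}]\le L_q\sqrt{\tfrac12\,\mathbb{E}_i[D_i^{\mathrm{sup}}]},
\]
which is the ``square root of the mean supervision KL.'' Note that Pinsker is applied with the current-context teacher $q$ in the first slot of the KL, matching the definition of $D_i^{\mathrm{sup}}$; this is harmless because TV is symmetric.

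None of these steps is a real obstacle. The only point needing care is interpretive: the token-level version from Eq.~\ref{eq:kl-diagnostics}, where $D_i^{\mathrm{sup}}$ is an average over aligned positions $\mathcal{U}_i^{\mathrm{sup}}$. There I would apply the pointwise bound at each aligned $h$, average over $h$, and use Pinsker plus concavity of the square root once more across positions. The resulting bound has the same form, with $D_i^{\mathrm{sup}}$ now the position-averaged KL.
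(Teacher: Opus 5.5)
Your proof is correct and follows the paper's reasoning: the pointwise bound is the $q$-Lipschitz half of Assumption~\ref{assump:tv-lipschitz}, and the buffer-level claim follows by averaging over $\mathcal{B}^t$ and applying Pinsker. Your extra Jensen step for the concave square root, passing from $\mathbb{E}_i[\sqrt{\tfrac12 D_i^{\mathrm{sup}}}]$ to $\sqrt{\tfrac12\,\mathbb{E}_i[D_i^{\mathrm{sup}}]}$, is left implicit in the paper but is exactly what the phrase ``square root of the mean supervision KL'' requires.
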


This proposition is the formal justification for including $D_i^{\mathrm{sup}}$ in the sample-level freshness score. The quantity need not be large in every setting, but whenever teacher outputs depend on mutable trajectory context, the stale label can shift the implemented objective away from the supervision that would be attached to the current trajectory.

\begin{remark}[When the supervision gap is negligible]
If the teacher is fixed, the effective context is unchanged, and labeling is deterministic, then
\[
\pi_{\mathrm{teacher}}(\cdot\mid c_i^t) = \pi_{\mathrm{teacher}}(\cdot\mid c_i^{r(i)})
\qquad\Longrightarrow\qquad
D_i^{\mathrm{sup}} = 0.
\]
More generally, if the teacher distribution is stable under small context perturbations, then $D_i^{\mathrm{sup}}$ remains small. In those situations, the freshness framework naturally reduces to control for update lag and rollout drift.
\end{remark}

\subsection{Interpretation of Coefficients in Freshness Computation}

The surrogate discrepancy score from Sec.~4 is
\[
\widetilde{\Delta}_i^t = \alpha \sqrt{D_i^{\mathrm{roll}}} + \beta \sqrt{D_i^{\mathrm{sup}}}
\]
and the corresponding freshness score is
\[
f_i = (\tau_i+1)^{-1}\exp(-\widetilde{\Delta}_i^t),
\qquad
\tau_i = t-r(i).
\]
This pair should therefore be interpreted as a calibrated surrogate-and-control construction for sample-level stale bias. The theory above justifies three design choices and no more:
\begin{enumerate}[leftmargin=1.5em]
    \item rollout drift and supervision drift are distinct nonnegative mismatch channels;
    \item larger values of those diagnostics increase potential objective discrepancy under the stated assumptions, while larger lag enlarges the budget over which rollout drift may accumulate; and
    \item mapping those signals through a monotone freshness transformation suppresses higher-risk samples.
\end{enumerate}
What the theory does \emph{not} claim is that $\alpha$ and $\beta$ are universal constants or that the chosen surrogate is uniquely optimal. In practice, they should be understood as domain-dependent calibration parameters that align observable mismatch diagnostics onto a common operational scale.

\begin{algorithm*}[t]
\small
\caption{${f}$-OPD training step. The algorithm first measures per-sample freshness from replay-based rollout and supervision diagnostics, then performs buffer-level refresh if aggregate freshness or aligned support degrades, and finally optimizes the freshness-weighted, rollout-anchored objective on the surviving samples.}
\label{alg:fopd}
\KwIn{active buffer $\mathcal{B}^t$; current student $\pi_\theta^t$; stale rollout checkpoints $\{\pi_\theta^{r(i)}\}$; teacher model $\pi_{\mathrm{teacher}}$; thresholds $\xi,\kappa_f,\kappa_{\mathrm{roll}},\kappa_{\mathrm{sup}}$; weights $\alpha,\beta,\lambda$}
\ForEach{$i \in \mathcal{B}^t$}{
replay sample $i$ on aligned token positions $\mathcal{U}_i^{\mathrm{roll}}, \mathcal{U}_i^{\mathrm{sup}}$\;
\If{$|\mathcal{U}_i^{\mathrm{roll}}|=0$ \textbf{or} $|\mathcal{U}_i^{\mathrm{sup}}|=0$}{
set $f_i=0$, $m_i^{\mathrm{roll}}=0$, and $m_i^{\mathrm{sup}}=0$\;
mark sample $i$ invalid for optimization and continue\;
}
compute rollout drift $D_i^{\mathrm{roll}}$ and supervision drift $D_i^{\mathrm{sup}}$ by Eq.~\ref{eq:kl-diagnostics}\;
compute lag $\tau_i = t-r(i)$ and freshness score $f_i = (\tau_i+1)^{-1}\exp(-\alpha \sqrt{D_i^{\mathrm{roll}}} - \beta \sqrt{D_i^{\mathrm{sup}}})$\;
compute alignment ratios $m_i^{\mathrm{roll}} = |\mathcal{U}_i^{\mathrm{roll}}| / H_i$ and $m_i^{\mathrm{sup}} = |\mathcal{U}_i^{\mathrm{sup}}| / H_i$\;
}
aggregate $\bar{f}^{t} = |\mathcal{B}^t|^{-1}\sum_{i\in\mathcal{B}^t} f_i$, $\bar{\mathcal{M}}^{t,\mathrm{roll}} = |\mathcal{B}^t|^{-1}\sum_i m_i^{\mathrm{roll}}$, and $\bar{\mathcal{M}}^{t,\mathrm{sup}} = |\mathcal{B}^t|^{-1}\sum_i m_i^{\mathrm{sup}}$\;
\If{$\bar{f}^{t} \le \kappa_f$ \textbf{or} $\bar{\mathcal{M}}^{t,\mathrm{roll}} < \kappa_{\mathrm{roll}}$ \textbf{or} $\bar{\mathcal{M}}^{t,\mathrm{sup}} < \kappa_{\mathrm{sup}}$}{
refresh $\mathcal{B}^t$ with newly rolled out and relabeled samples\;
recompute $\{D_i^{\mathrm{roll}}, D_i^{\mathrm{sup}}, f_i\}_{i\in\mathcal{B}^t}$ on the refreshed buffer\;
}
\ForEach{minibatch $\mathcal{M} \subset \mathcal{B}^t$}{
form freshness weights $w_i = \sigma(f_i-\xi)$ for $i \in \mathcal{M}$\;
form aggregated current-context loss $\ell_i^t$ by averaging $\ell\bigl(\pi_{\theta}^t(\cdot\mid x_{i,h}), \pi_{\mathrm{teacher}}(\cdot\mid c_{i,h}^{t})\bigr)$ over $h\in\mathcal{U}_i^{\mathrm{sup}}$\;
form anchored loss $\mathcal{L}_{\mathcal{M}} = \frac{1}{|\mathcal{M}|}\sum_{i\in\mathcal{M}} w_i\left[\ell_i^t + \lambda D_i^{\mathrm{roll}}\right]$\;
update student parameters $\theta \leftarrow \theta - \eta \nabla_\theta \mathcal{L}_{\mathcal{M}}$\;
}
\KwOut{updated student $\pi_\theta^{t+1}$ and refreshed metadata for the next asynchronous step}
\end{algorithm*}

\section{Additional Experimental Details}
\subsection{Settings}
\label{append-sec:setting}
\paragraph{Benchmark inventory and splits.}
The paper is organized around three task families: short-horizon mathematical reasoning, medium-horizon tool-use reasoning, and long-horizon coding-agent issue resolution. The short-horizon benchmark uses MATH500 \citep{hendrycks2021measuring} together with a held-out 1{,}200-problem olympiad-style pool constructed from recent public contest problems; 200 problems are reserved as a pilot split for threshold selection, and 1{,}000 are used for the reported evaluation tables. The tool-use benchmark uses that same 1{,}000-problem report split, but each problem is wrapped in a deterministic code-interpreter environment following the ReTool formulation of interleaved reasoning and tool execution \citep{feng2025retool}. For coding, training trajectories are drawn from the mini-SWE agent, while the final coding evaluation uses 300 SWE-bench Verified issues \citep{jimenez2024swebench}; 50 issues form a pilot split, and 250 form the reported evaluation set. We reference Codex-style evaluation \citep{chen2021evaluating}, early program-synthesis benchmarks \citep{austin2021program}, AgentBench \citep{liu2024agentbench}, WebArena \citep{zhou2024webarena}, and VisualWebArena \citep{koh2024visualwebarena} only as neighboring evaluation settings rather than as the paper's primary evidence.

\paragraph{Student and teacher checkpoints.}
The reasoning student is Qwen2.5-Math-7B-Instruct \citep{yang2025qwen25}, with Qwen2.5-Math-72B-Instruct \citep{yang2025qwen25} as its supervision source. In the reshaped agentic setup, both the tool-use student and the coding-agent student are Qwen3-8B \citep{yang2025qwen3}, and both use the same frozen Qwen3-Coder-30B-A3B-Instruct supervision source \citep{yang2025qwen3}. All teachers are frozen. Student rollouts use temperature 0.7 and top-$p$ 0.95 for reasoning and tool use, and temperature 0.6 for coding. Teacher labels are greedily generated in every setting, so supervision drift reflects asynchronous state drift rather than teacher sampling noise. These choices keep the supervision source stronger than the student without turning to a closed model, and they align with the public Qwen technical reports rather than informal leaderboard comparisons.

\paragraph{Async pipeline definition.}
A rollout batch is generated by a student snapshot, labeled by the frozen teacher, and consumed by a later student snapshot. Lag $k$ means that the optimizer is allowed to apply roughly $k$ update steps between the snapshot used for rollout generation and the current training state consuming that sample. For each buffered sample $i$, we compute the method quantities from Sec.~4:
\[
\tau_i = t-r(i),
\qquad
\widetilde{\Delta}_i^t = \alpha \sqrt{D_i^{\mathrm{roll}}} + \beta \sqrt{D_i^{\mathrm{sup}}},
\qquad
f_i = (\tau_i+1)^{-1}\exp(-\widetilde{\Delta}_i^t),
\qquad
w_i = \sigma(f_i-\xi).
\]
Here $D_i^{\mathrm{roll}}$ is the replay-based rollout-drift statistic, $D_i^{\mathrm{sup}}$ is the replay-based supervision-drift statistic, and $\tau_i$ is the update lag attached to the sample. The reported full method uses the same thresholded freshness gating as Sec.~4, and applies the resulting gate to both the distillation term and the rollout-anchored regularizer. The lag-only freshness-weighting baseline replaces the full score with
\[
f_i^{\mathrm{lag\text{-}only}} = (\tau_i+1)^{-1},
\qquad
w_i = \sigma(f_i^{\mathrm{lag\text{-}only}}-\xi),
\]
so it tests whether age alone is a sufficient freshness surrogate once rollout drift and supervision drift are ignored. All thresholds and score coefficients are selected on the task-specific pilot split and then fixed for the reported runs. The full ${f}$-OPD method combines this thresholded gating with rollout-anchored regularization and adaptive refresh.

Adaptive refresh follows the buffer-level rule from Sec.~4 rather than an auxiliary heuristic. We monitor
\[
\bar{f}^{t} = \frac{1}{|\mathcal{B}^t|}\sum_{i\in\mathcal{B}^t} f_i,
\qquad
\bar{\mathcal{M}}^{t,\mathrm{roll}} = \frac{1}{|\mathcal{B}^t|}\sum_{i\in\mathcal{B}^t} \frac{|\mathcal{U}_i^{\mathrm{roll}}|}{H_i},
\qquad
\bar{\mathcal{M}}^{t,\mathrm{sup}} = \frac{1}{|\mathcal{B}^t|}\sum_{i\in\mathcal{B}^t} \frac{|\mathcal{U}_i^{\mathrm{sup}}|}{H_i},
\]
and refresh the active buffer when
\[
\bar{f}^{t} \le \kappa_f,
\qquad \text{or} \qquad
\bar{\mathcal{M}}^{t,\mathrm{roll}} < \kappa_{\mathrm{roll}},
\qquad \text{or} \qquad
\bar{\mathcal{M}}^{t,\mathrm{sup}} < \kappa_{\mathrm{sup}}.
\]
Here $H_i$ is the sample token length, as in Sec.~4.
Adaptive refresh can trigger at most once every 200 optimizer steps to avoid oscillatory recollection. The fixed-refresh baseline is intentionally more aggressive: it refreshes deterministically every 10 optimizer steps. Under the uniform 400-step training schedule used throughout the paper, this baseline therefore performs 40 scheduled refreshes, corresponding to a refresh rate of 100.0 in the systems table under the paper's normalized convention. To estimate the corresponding throughput, we use a simple amortized overlap model in which one full refresh costs about four asynchronous step-equivalents of recollection, relabeling, and pipeline refill; this yields an expected coding-side relative throughput of $1.15\times$ versus synchronous OPD under the every-10-step schedule. Rollout KL and response entropy are logged at the same checkpoints as operational monitoring proxies, but they are not the formal trigger conditions reported for the method.


\paragraph{Tool-use and coding-agent environment details.}
The tool-use environment follows ReTool \citep{feng2025retool} and exposes a single deterministic code-interpreter tool that can be interleaved with natural-language reasoning. Each tool invocation is a structured Python execution request; the harness returns the execution status along with stdout and stderr, and that observation is appended to the running dialogue state before the next student action. An invalid tool call is any malformed tool block, an unparsable execution payload, an unsupported invocation format, or a tool response that cannot be re-integrated into the active context. Each episode allows at most six tool calls and a maximum of ten total assistant turns.

The coding-agent setup adopts mini-swe-agent as the scaffold \citep{yang2025swesmith}, and the final benchmarked runs are evaluated on SWE-bench Verified instances \citep{jimenez2024swebench}. For each evaluation instance, the agent receives a repository snapshot before the human fix, the issue description, and the associated test-facing sandbox, then interacts with the repository through an edit-execute-observe loop within an isolated, containerized environment. We use the same internal edit-execute-observe scaffold training and evaluation, rather than switching to a separate leaderboard-specific scaffold; only the repository instances and the final grading harness differ across the two stages. The final patch is graded using the Docker-based SWE-bench evaluation procedure: it is applied to the repository state for that instance, then checked against the corresponding test suite.

\paragraph{Metrics, uncertainty, and compute.}
Reasoning uses mean@1 accuracy. Tool-use uses mean@4 accuracy together with the same \emph{invalid tool} definition as in Sec.~5. Coding uses the same single-run SWE-bench Verified \emph{resolve rate} protocol as in Sec.~5, together with repeated-loop, premature-stop, and post-patch regression rates. Throughput is measured as the number of completed training samples per wall-clock hour and is normalized to synchronous OPD. Peak-to-final drop is the gap between the best and final checkpoint scores, using tool-use mean@4 or coding resolve rate, as appropriate for the table in question. Collapse is operationalized as a run whose final task score is more than 40\% below its peak and remains below 90\% of peak for the last 10\% of training. The main freshness diagnostics are rollout drift and supervision drift. Rollout KL and response entropy are also logged every evaluation checkpoint as engineering-side warning signals. Supervision-side replay diagnostics are logged only when current relabeling contexts can be reconstructed reliably; in the current protocol, that means the tool-use ablations and the coding-agent runs, not pure reasoning. The figures report five-seed means with shaded standard-deviation bands for learning curves, and the appendix summaries expose the same uncertainty through bands, faint seed-level markers, and error bars where appropriate.

\paragraph{Metrics, uncertainty, and compute.}
Reasoning uses mean@1 accuracy. Tool-use uses the same single-trajectory \emph{tool success} and \emph{invalid tool}. Coding uses the same single-run SWE-bench Verified \emph{resolve rate} protocol, together with repeated-loop, premature-stop, and post-patch regression rates. Throughput is measured as completed training samples per wall-clock hour and is normalized to synchronous OPD. Peak-to-final drop is the gap between the best checkpoint score and the terminal checkpoint score, using tool-use success or coding resolve rate as appropriate for the table in question. Collapse is operationalized as a run whose final task score is more than 40\% below its peak and remains below 90\% of peak for the last 10\% of training. The main freshness diagnostics are rollout drift, supervision drift, the sample-level freshness score, and the buffer-level refresh statistics above. Rollout KL and response entropy are also logged every evaluation checkpoint as engineering-side warning signals. Supervision-side replay diagnostics are logged only when current relabeling contexts can be reconstructed reliably; in the current protocol that means the tool-use ablations and the coding-agent runs, not pure reasoning. The figures report five-seed means with shaded standard-deviation bands for learning curves, and the appendix summaries expose the same uncertainty through bands and error bars where appropriate. The reported compute budget is about 16k H100-80GB GPU-hours in total: 3k for reasoning, 5k for tool-use, and 8k for coding. Reasoning and tool-use runs use 16 H100-80GB GPUs, while coding runs use 24 H100-80GB GPUs.

\paragraph{Compared methods.}
\label{append-sec:Compared_methods}
Unless otherwise stated, the method comparisons below use the hardest asynchronous setting, $\mathrm{lag}=8$, because that is where freshness mismatch is most visible. We use the following names throughout the section. \textbf{Synchronous OPD} is the fully synchronized on-policy baseline. \textbf{Asynchronous OPD} is the bare overlapped pipeline with no freshness-aware control. \textbf{Async OPD + fixed refresh} keeps the bare asynchronous objective but deterministically refreshes the active buffer every 10 optimizer steps. \textbf{Async OPD + lag-only freshness weighting} replaces the full freshness score with
\[
f_i^{\mathrm{lag\text{-}only}} = (\tau_i+1)^{-1},
\qquad
w_i = \sigma(f_i^{\mathrm{lag\text{-}only}}-\xi),
\]
so it tests whether update lag alone is an adequate freshness surrogate. \textbf{Async OPD + rollout-anchored regularization} keeps only the rollout-anchored regularization term from Sec.~4 on top of bare asynchronous OPD. \textbf{Async OPD + adaptive refresh} keeps only the adaptive refresh rule driven by $\bar{f}^{t}$, $\bar{\mathcal{M}}^{t,\mathrm{roll}}$, and $\bar{\mathcal{M}}^{t,\mathrm{sup}}$. Finally, \textbf{$\boldsymbol{f}$-OPD} is the full method that combines the freshness score $f_i$, thresholded freshness gating, rollout-anchored regularization, and adaptive refresh.

\paragraph{Table layout and questions.}
The experiments are organized into a main comparison, a hardest-lag ablation, a score-design ablation, and a behavioral failure analysis. The following subsections interpret those views in turn.

\section{Additional Tables and Figures}

Table~\ref{tab:benchmark-summary} further provides a summary of the protocol in our experiments. 

Fig.~\ref{fig:appendix-entropy-lag-compare} supplements Fig.~\ref{fig:Fig1} with task-wise entropy dynamics across different lags. 

Moreover, Fig.~\ref{fig:appendix-failure-analysis} provides a comparative analysis of different freshness designs through coding-agent failure rates, including no freshness control, lag-only weighting, and the freshness weighting used in \textit{f}-OPD. The results show that \textit{f}-OPD consistently achieves the lowest failure rate, highlighting the effectiveness of its coarse-grained sample-level freshness estimation.
\begin{table}[ht]
\centering
\caption{Compact benchmark and model inventory for the three primary task families. LiveCodeBench is treated as supplementary coding evidence and is not part of the main-text benchmark count. Metric names follow the main-text reporting conventions.}
\label{tab:benchmark-summary}
\small
\setlength{\tabcolsep}{4pt}
\begin{tabular}{@{}>{\RaggedRight\arraybackslash}p{2.0cm}>{\RaggedRight\arraybackslash}p{3.55cm}>{\RaggedRight\arraybackslash}p{3.75cm}>{\RaggedRight\arraybackslash}p{2.5cm}>{\RaggedRight\arraybackslash}p{0.9cm}@{}}
\toprule
\textbf{Family} & \textbf{Student} & \textbf{Supervision source} & \textbf{Primary metric} & \textbf{Seeds} \\
\midrule
Reasoning & Qwen2.5-Math-7B-Instruct & Qwen2.5-Math-72B-Instruct & mean@1 accuracy & 5 \\
Tool-use & Qwen3-8B & Qwen3-Coder-30B-A3B-Instruct & mean@4 accuracy & 5 \\
Coding-agent & Qwen3-8B & Qwen3-Coder-30B-A3B-Instruct & Resolve rate & 5 \\
\bottomrule
\end{tabular}
\end{table}


\begin{figure*}[t]
\centering
\includegraphics[width=\textwidth]{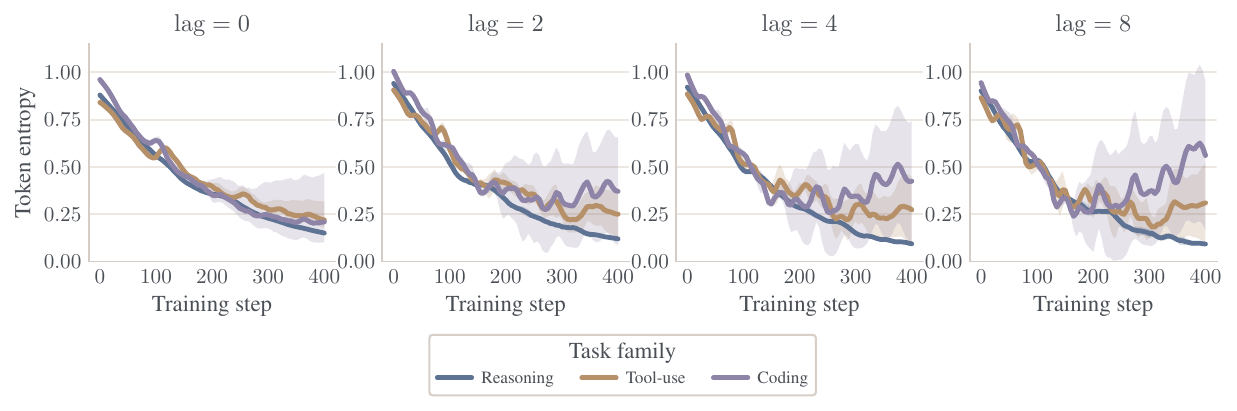}
\caption[Supplementary entropy dynamics under fixed lag values.]{Supplementary entropy dynamics under fixed lag values. Unlike Figure~\ref{fig:lag-diagnostics}(c), which reports a task-average entropy curve computed as an unweighted average over Reasoning / Tool-use / Coding for each lag, this figure breaks the same runs back out by task family and shows one panel for each lag setting (0, 2, 4, and 8). Reasoning remains comparatively stable even as lag increases, while tool use and coding progressively develop wider late-training uncertainty bands. The upper side of those bands expands especially quickly at larger lag, indicating that some runs retain high-entropy exploration, whereas others partially collapse. Lines show five-seed means and shaded regions show standard deviation.}
\label{fig:appendix-entropy-lag-compare}
\end{figure*}


\begin{figure}[ht]
\centering
\includegraphics[width=0.95\linewidth]{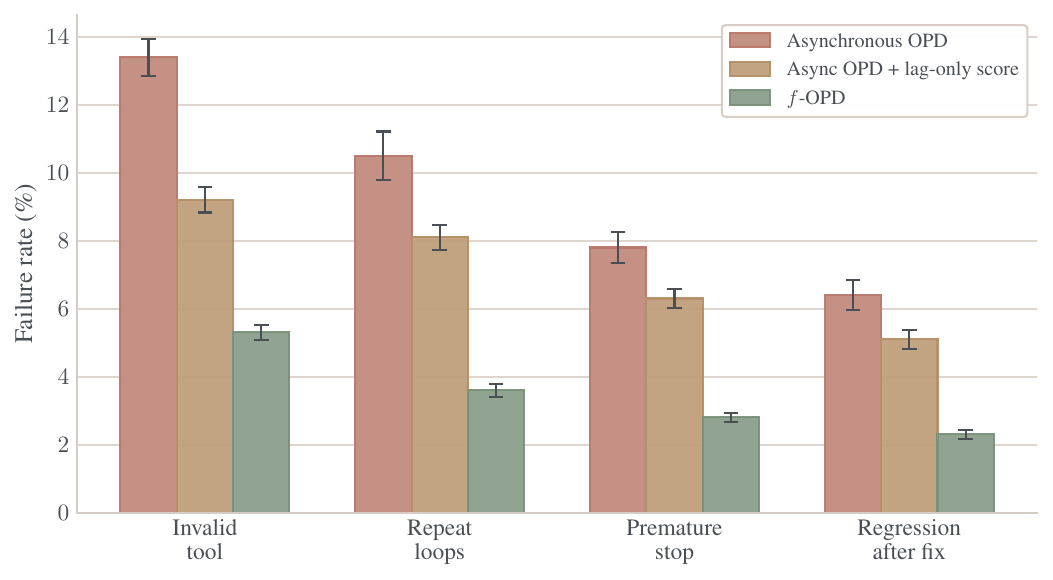}
\caption{Supplementary behavioral failure analysis for the long-horizon coding-agent setting. Bars show five-seed means, and error bars show $\pm 1$ standard deviation for the representative asynchronous, lag-only freshness score, and fully stabilized settings.}
\label{fig:appendix-failure-analysis}
\end{figure}

\section{Discussion and Limitations}

Our claims are deliberately scoped: asynchronous OPD becomes unstable when the student outruns its data and supervision stream, and \textit{f}-OPD reduces this mismatch through a principled freshness-aware control. Several limitations qualify this scope.

First, our experiments cover a small set of model families and tasks—math reasoning, tool-use, and coding agents—chosen to span interaction horizons rather than to provide a comprehensive survey. The robustness of our conclusions across more benchmarks, seeds, and student–teacher pairs remains to be verified.

Second, supervision drift is setting-dependent. In static labeling settings with a fixed teacher and identical context, $D_i^{\mathrm{sup}}$ is near-zero by construction; its value lies in asynchronous relabeling or trajectory-conditioned regimes. We therefore frame supervision drift as a \emph{generally applicable} term whose contribution may be negligible in some settings, not as a universally dominant factor.

Third, our analysis is a discrepancy bound under bounded-regularity assumptions, not a convergence theory. The freshness score is a calibrated surrogate for stale-objective bias, and the rollout-anchored KL acts as a trust-region constraint on the stale buffer—neither claim eliminates asynchronous bias outright. Tightening these bounds, relating calibration coefficients to queue statistics, and broader empirical stress tests are left to future work.

\section{Broader Impact}
\textit{f}-OPD offers a practical recipe for navigating the performance–efficiency trade-off in on-policy distillation, supporting more reliable post-training of large language models under realistic asynchronous pipelines. 

Beyond technical contributions, our method can lower the resource barrier for academic and resource-constrained research groups, helping democratize a direction currently concentrated in a few well-resourced institutions.


\end{document}